
\documentclass{article}

\usepackage{microtype}
\usepackage{graphicx}
\usepackage{subfigure}
\usepackage{booktabs} 

\usepackage{hyperref}



\usepackage[accepted]{icml2021}

\usepackage{amsmath,amsfonts,bm}









\def\eqref#1{equation~\ref{#1}}









\def\1{\bm{1}}










\DeclareMathAlphabet{\mathsfit}{\encodingdefault}{\sfdefault}{m}{sl}
\SetMathAlphabet{\mathsfit}{bold}{\encodingdefault}{\sfdefault}{bx}{n}











\newcommand{\E}{\mathbb{E}}

\newcommand{\R}{\mathbb{R}}



\usepackage{url}
\usepackage{graphicx}
\usepackage{booktabs}
\usepackage{tikz}
\usepackage{amsmath,amssymb} 
\usepackage{color}
\usepackage{gensymb}
\usepackage{wrapfig}
\usepackage{caption}
\usepackage{soul}
\usepackage{algorithm}
\usepackage{algorithmic}
\usepackage{wrapfig}
\usepackage{gensymb}
\usepackage{amsthm}
\setlength{\tabcolsep}{4.7pt}
\newtheorem{theorem}{Theorem}[section]

\usepackage{color}
\definecolor{newgreen}{rgb}{0.0, 0.5, 0.0}

\icmltitlerunning{Uncertainty Weighted Actor-Critic for Offline Reinforcement Learning}


\begin{document}

\twocolumn[
\icmltitle{Uncertainty Weighted Actor-Critic for Offline Reinforcement Learning}



\icmlsetsymbol{equal}{*}

\begin{icmlauthorlist}
\icmlauthor{Yue Wu}{apple,cmu}
\icmlauthor{Shuangfei Zhai}{apple}
\icmlauthor{Nitish Srivastava}{apple}
\icmlauthor{Joshua Susskind}{apple}
\icmlauthor{Jian Zhang}{apple}
\icmlauthor{Ruslan Salakhutdinov}{cmu}
\icmlauthor{Hanlin Goh}{apple}
\end{icmlauthorlist}

\icmlaffiliation{apple}{Apple Inc.}
\icmlaffiliation{cmu}{Carnegie Mellon University}

\icmlcorrespondingauthor{Yue Wu}{ywu5@andrew.cmu.edu}
\icmlkeywords{reinforcement learning, offline, batch reinforcement learning, off-policy, uncertainty estimation, dropout, actor-critic, bootstraping error}

\vskip 0.3in
]
\printAffiliationsAndNotice{}
\begin{abstract}

Offline Reinforcement Learning promises to learn effective policies from previously-collected, static datasets without the need for exploration. However, existing Q-learning and actor-critic based off-policy RL algorithms fail when bootstrapping from out-of-distribution (OOD) actions or states. We hypothesize that a key missing ingredient from the existing methods is a proper treatment of uncertainty in the offline setting. We propose Uncertainty Weighted Actor-Critic (UWAC), an algorithm that detects OOD state-action pairs and down-weights their contribution in the training objectives accordingly. Implementation-wise, we adopt a practical and effective dropout-based uncertainty estimation method that introduces very little overhead over existing RL algorithms. Empirically, we observe that UWAC substantially improves model stability during training. In addition, UWAC out-performs existing offline RL methods on a variety of competitive tasks, and achieves significant performance gains over the state-of-the-art baseline on datasets with sparse demonstrations collected from human experts.
\end{abstract}

\section{Introduction}
\label{section:introduction}
\vspace{-2mm}
Deep reinforcement learning (RL) has seen a surge of interest over the recent years.
It has achieved remarkable success in simulated tasks \citep{silver2017mastering,schulman2017proximal,SAC}, where the cost of data collection is low. However, one of the drawbacks of RL is its difficulty of learning from prior experiences. Therefore, the application of RL to unstructured real-world tasks is still in its primal stages, due to the high cost of active data collection. It is thus crucial to make full use of previously collected datasets whenever large scale online RL is infeasible.

Offline batch RL algorithms offer a promising direction to leveraging prior experience \citep{lange2012batch}. However, most prior off-policy RL algorithms \citep{SAC,munos2016safe,kalashnikov2018scalable,espeholt2018impala,awr} 
fail on offline datasets, even on expert demonstrations \citep{fu2020d4rl}. The sensitivity to the training data distribution is a well known issue in practical offline RL algorithms \citep{BCQ,BEAR,CQL,awr,MOPO}. A large portion of this problem is attributed to actions or states not being covered within the training set distribution. Since the value estimate on out-of-distribution (OOD) actions or states can be arbitrary, OOD value or reward estimates can incur destructive estimation errors that propagates through the Bellman loss and destabilizes training. Prior attempts try to avoid OOD actions or states by imposing strong constraints or penalties that force the actor distribution to stay within the training data \citep{BEAR,CQL,BCQ,laroche2019safe}. While such approaches achieve some degree of experimental success, they suffer from the loss of generalization ability of the $Q$ function. For example, a state-action pair that does not appear in the training set can still lie within the training set distribution, but policies trained with strong penalties will avoid the unseen states regardless of whether the $Q$ function can produce an accurate estimate of the state-action value. Therefore, strong penalty based solutions often promote a pessimistic and sub-optimal policy.
In the extreme case, e.g., in certain benchmarking environments with human demonstrations, the best performing offline algorithms only achieve the same performance as a random agent \citep{fu2020d4rl}, which demonstrates the need of robust offline RL algorithms.

In this paper, we hypothesize that a key aspect of a robust offline RL algorithm is a proper estimation and usage of uncertainty. On the one hand, one should be able to reliably assign an uncertainty score to any state-action pair; on the other hand, there should be a mechanism that utilizes the estimated uncertainty to prevent the model from learning from data points that induce high uncertainty scores. 

\begin{figure}[ht]
    \centering
    \includegraphics[trim={0.5cm 0 2.5cm 0cm},clip,width=\columnwidth, angle=0]{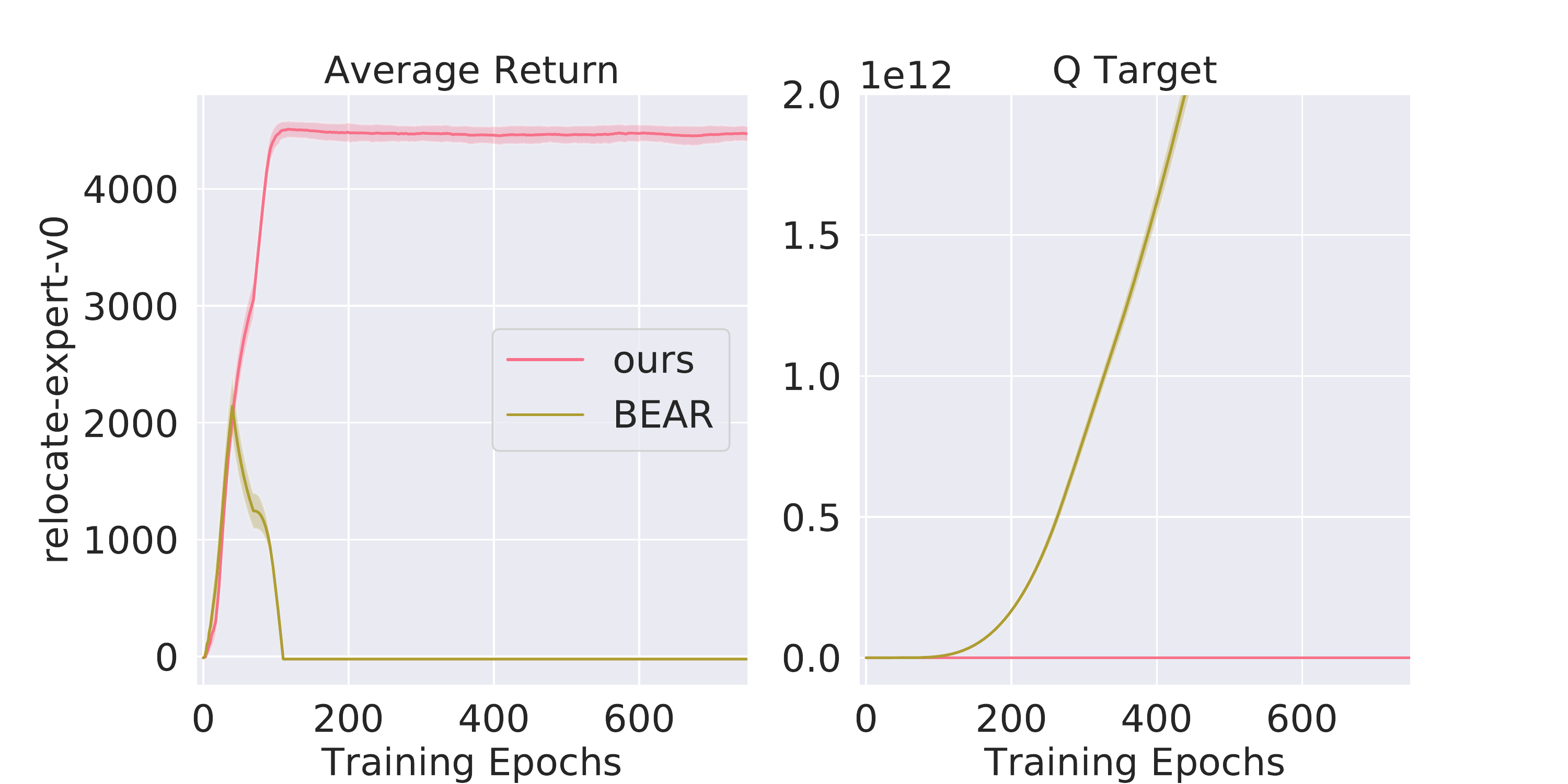}
    \vspace{-6mm}
    \caption{\small  \textbf{Left.} Plot of average return v.s. training epochs of our proposed method (red) v.s. baseline (brown) \citep{BEAR} on the relocate-expert offline dataset. \textbf{Right.} Corresponding plot of Q-Target values v.s. training epochs. Our proposed method achieves much higher average return, with better training stability, and more controlled Q-values.} 
    \vskip -0.2in
    \label{fig:teaser}
\end{figure}


The first problem relates closely to OOD sample detection, which has been extensively studied in the Bayesian deep learning community. \citep{gal2016dropout,gal2016uncertainty,osawa2019practical},
often measured by the uncertainty of the model.
We adopt the dropout based approach \cite{gal2016dropout}, due to its simplicity and empirical effectiveness. For the second problem, we provide an intuitive modification to the Bellman updates in actor-critic based algorithms. We then propose Uncertainty Weighted Actor Critic (UWAC), which simply down weighs the contribution of target state and action pairs with high uncertainty. By doing so, we prevent the $Q$ function from learning from overly optimistic targets that lie far away from training data distribution (high uncertainty). 


Empirically, we first verified the effectiveness of dropout uncertainty estimation at detecting OOD samples. We show that the uncertainty estimation makes intuitive sense in a simple environment. With the uncertainty based down weighting scheme, our method significantly improves the training stability over our chosen baseline \citep{BEAR}, and achieves state-of-the-art performance in a variety of standard benchmarking tasks for offline RL.

Overall, our contribution can be summarized as follows: 1) We propose a simple and efficient technique (UWAC) to counter the effect of OOD samples with no additional loss terms or models. 2) We experimentally demonstrate the effectiveness of dropout uncertainty estimation for RL. 3) UWAC offers a novel way for stabilizing offline RL. 4) UWAC achieves SOTA performance on common offline RL benchmarks, and obtains significant performance gain on narrow human demonstrations.

\section{Related Work}
\label{section:background}
\vspace{-2mm}
In this work, we consider offline batch reinforcement learning (RL) under static datasets. Offline RL algorithms are especially prone to errors from inadequate coverage of the training set distribution, distributional shifts during actor critic training, and the variance induced by deep neural networks. Such error have been extensively studied as "error propagation" in approximate dynamic programming (ADP) \citep{bertsekas1996neuro,farahmand2010error,munos2003error,scherrer2015approximate}. \citet{scherrer2015approximate} obtains a bound on the point-wise Bellman error of approximate modified policy iteration (AMPI) with respect to the supremum of the error in function approximation for an arbitrary iteration. We adopt the theoretical tools from \citep{BEAR} and study the accumulation and propagation of Bellman errors under the offline setting.

One of the most significant problems associated with off-policy and offline RL is the bootstrapping error \citep{BEAR}: When training encounters an action or state unseen within the training set, the critic value estimate on out-of-distribution (OOD) samples can be arbitrary and incur an error that destabilizes convergence on all other states \citep{BEAR,BCQ} through the Bellman backup. 
\citet{MOPO} trains a model of the environment that captures the uncertainty. The uncertainty estimate is used to penalize reward estimation for uncertain states and actions, promoting a pessimistic policy against OOD actions and states. The main drawback of such a model based approach is the unnecessary introduction of a model of the environment -- it is often very hard to train a good model. On the other hand, model-free approaches either train an agent pessimistic to OOD states and actions \citep{wu2019behavior,CQL} or constrain the actor distribution to the training set action distribution \citep{BCQ,BEAR,wu2019behavior,jaques2019way,glearning,laroche2019safe}. However, the pessimistic assumption that all unseen states or actions are bad may lead to a sub-optimal agent and greatly reduce generalization to online fine-tuning \citep{nair2020accelerating}. Distributional constraints, in addition, rely on approximations since the actor distribution is often implicit. Such approximations cause practical training instability that we will study in detail in section \ref{subsec:training}.

We propose a model-free actor-critic method that down-weighs the Bellman loss term by inverse uncertainty of the critic target. Uncertainty estimation has been implemented in model-free RL for safety and risk estimation \citep{clements2019estimating,hoel2020reinforcement} or exploration \citep{gal2016dropout,linesdisentangling}, through ensembling \citep{hoel2020reinforcement} or distributional RL \citep{quantile,clements2019estimating}. However, distributional RL works best on discrete action spaces \citep{quantile} and require additional distributional assumptions when extended to continuous action spaces \citep{clements2019estimating}.
Our approach estimates uncertainty through Monte Carlo dropout (MC-dropout) \citep{srivastava2014dropout}. MC-dropout uncertainty estimation is a simple method with minimal overhead and has been thoroughly studied in many traditional supervised learning tasks in deep learning \citep{gal2016dropout,hron2018variational,kingma2015variational,gal2016theoretically}. 
Moreover, we observe experimentally that MC-dropout uncertainty estimation behaves similarly to explicit ensemble models where the prediction is the mean of the ensembles, while being much simpler \citep{lakshminarayanan2017simple,srivastava2014dropout}. 

The most relevant to our work are MOReL \citep{morel}, MOPO \citep{MOPO}, BEAR \citep{BEAR}, and CQL \citep{CQL}. MOReL and MOPO approach offline RL from a different model-based paradigm, and obtain competitive results on some tasks with wide data distribution. However, given the model-based nature, MOReL and MOPO achieve limited performance on most other benchmarks due to the performance of the model being limited by the data distribution. On the other hand, BEAR and CQL both use actor-critic and do not suffer from the above problem. We use BEAR (discussed in section \ref{subsec:bear}) as our baseline algorithm and achieve significant performance gain through dropout uncertainty weighted backups. CQL avoids OOD states/actions through direct $Q$ value penalty on actions that leads to OOD unseen states within the training set. However the penalty proposed by CQL 1) risks hurting Q estimates for (action, state) pairs that are not OOD, since samples not seen within the dataset can still lie within the true dataset distribution; 2) limits the policy to be pessimistic, which may be hard to fine-tune once on-policy data becomes available. Additionally our method is not limited to BEAR and can apply to other actor-critic methods like CQL. We leave such exploration to future works. 
\section{Preliminaries}
\label{section:Preliminaries}
\vspace{-2mm}
\subsection{Notations}
\vspace{-1mm}
\label{subsec:notations}
Following \citet{BEAR}, we represent the environment as a Markov decision process (MDP) comprising of a 6-tuple $(\mathcal{S},\mathcal{A},P,R,\rho_0,\gamma)$, where $\mathcal{S}$ is the state space, $\mathcal{A}$ is the action space, $P(s'|s,a)$ is the transition probability distribution, $\rho_0$ is the initial state distribution, $R:\mathcal{S}\times \mathcal{A}\to \R$ is the reward function, and $\gamma\in (0,1]$ is the discount factor. Our goal is to find a policy $\pi(s|a)$ from the set of policy functions $\Pi$ to maximize the expected cumulative discounted reward. 

Standard Q-learning learns an optimal state-action value function $Q^*(s,a)$, representing the expected cumulative discounted reward starting from $s$ with action $a$ and then acting optimally thereafter. Q-learning is trained on the Bellman equation defined as follows with the Bellman optimal operator $\mathcal{T}$ defined by:
\begin{equation}\label{equation:Q-learning}
\mathcal{T}Q(s,a):=R(s,a)+\gamma\E_{P(s'|s,a)}\left[\max_{a'}Q(s',a')\right]
\end{equation}

In practice, the critic ($Q$ function) is updated through dynamic programming, by projecting the target $Q$ estimate ($\mathcal{T}Q$) into $Q$ (i.e. minimizing Bellman Squared Error $\E\left[(Q-\mathcal{T}Q)^2\right]$). Since $\max_{a'}Q(s',a')$ in generally intractable in continuous action spaces, an actor ($\pi_\theta$) function is learned to maximize the critic function ($\pi_\theta(s) := \arg\max_a Q(s,a)$) \citep{SAC,fujimoto2018addressing,sutton2018reinforcement}.

In the context of offline reinforcement learning, naively performing $\max_{a'}Q(s',a')$ in equation \ref{equation:Q-learning} may result in an $a'$ unseen within the training dataset (OOD), and resulting in a $Q$ estimate with very large error that can propagate through the Bellman bootstrapping and destabilize training on other states \citep{BEAR}. 

\subsection{Baseline Algorithm}
\vspace{-1mm}
\label{subsec:bear}
We use BEAR \citep{BEAR} as our baseline algorithm. BEAR restricts the set of policy functions ($\Pi^\epsilon$) to output actions that lies in the support of the training distribution:
\begin{equation} \label{equation:bear}
\pi(\cdot|s):=\arg\max_{\pi'\in\Pi^\epsilon}\E_{a \sim \pi'(\cdot|s)}\left[Q(s,a)\right]
\end{equation}

Since the true support of $\pi \in \Pi^\epsilon$ is intractable. \citet{BEAR} instead relies on an approximate support constraint through optimizing sampled maximum mean discrepancy (MMD) \citep{MMD} between the training action distribution and the policy distribution.

However, this constraint eliminates the possibility of the $Q$ function to learn to generalize to state-action pairs beyond the training dataset and therefore limits the agent's performance and generalization. 
Moreover, the justification behind the sampled MMD approximation as support constraints is largely based on empirical evidence, and we observe numeric instability caused by discrepancies between $Q$ estimates and average returns on some narrower offline datasets (Figure \ref{fig:teaser}). Such observations also correspond to \citet{BEAR}'s description in section 7. 

\section{Uncertainty weighted offline RL}
\label{section:algorithm}
\vspace{-2mm}
Our approach (UWAC) is motivated by connecting offline RL with the well-established Bayesian uncertainty estimation methods. This connection enables UWAC to ``identify" and ``ignore" OOD training samples, with no additional models or constraints.

The design choice to use Monte Carlo (MC) dropout for uncertainty estimation is out of implementation simplicity. MC dropout on the Q function has been studied and applied to online RL to encourage exploration through Thompson Sampling \citep{gal2016dropout}. Despite their limitations as noted by \citet{osband2018randomized}, random prior based methods including dropout have been widely applied to capture uncertainty in RL \citep{gal2016dropout,osband2018randomized,fortunato2017noisy,lipton2018bbq,tang2017variational,touati2020randomized}.

Additionally, we note that uncertainty estimation methods that enforce time-wise or trajectory-wise consistency \citep{osband2016deep, osband2018randomized} are incompatible with the offline RL problem since the offline dataset does not necessarily need to contain full trajectories. Our experiments (section~\ref{subsec:lunarlander}) empirically demonstrate that dropout uncertainty estimation can identify OOD states.
\subsection{Uncertainty estimation through dropout}
\vspace{-1mm}

Let $X$ capture all the state-action pairs in the training set: $X=(s,a)$, and $Y$ capture the true $Q$ value of the states. We draw inspiration from a Bayesian formulation for the $Q$ function in RL parameterized by $\theta$, and maximize $p(\theta|X,Y)=p(Y|X,\theta)p(\theta)/p(Y|X)$ as our objective. Since $p(Y|X)$ is generally intractable, we approximate the inference process through dropout variational inference \citep{gal2016dropout}, by training with dropout before every weight layer, and also performing dropout at test time (referred to as Monte Carlo dropout). The model uncertainty is captured by the approximate predictive variance with respect to the estimated $\hat{Q}$ for $\color{red}T$ stochastic forward passes
\begin{equation*}
\begin{split}
&Var[Q(s,a)] \approx \\ 
&\textcolor{newgreen}{\sigma^2} + \textcolor{red}{\frac{1}{T}\sum_{t=1}^T\hat{Q}_t(s,a)^\top \hat{Q}_t(s,a)} - \textcolor{blue}{E[\hat{Q}(s,a)]^\top E[\hat{Q}(s,a)]}
\end{split}
\end{equation*}
with $\mathbin{\textcolor{newgreen}{\sigma^2}}$ corresponding to the inherent noise in the data, the {\textcolor{red}{second}} term corresponding to how much the model is uncertain about its predictions, and $\mathbin{\textcolor{blue}{E[\hat{Q}(s,a)]}}$ the predictive mean. We therefore use the {\textcolor{red}{second}}$-${\textcolor{blue}{third}} term to capture model uncertainty for OOD sample detection.


Overall, instead of training a $Q^{\pi}$ function and policy $\pi$, we define an uncertainty-weighted policy distribution $\pi'$ with respect to the original policy distribution $\pi(\cdot|s)$, the $Q^{\pi'}_0$ from last training iteration, and normalization factor $Z(s)$
\begin{equation}\label{equation:pi'}
\begin{split}
\pi'(a|s) &= \frac{\beta}{Var\left[Q^{\pi'}_0(s,a)\right]}\pi(a|s) / Z(s); \\
Z(s) &= \int_{a} \frac{\beta}{Var\left[Q^{\pi'}_0(s,a)\right]} \pi(a|s) da
\end{split}
\end{equation}
We show in the appendix \ref{subsec:pf} that optimizing $\pi'$ results in theoretically better convergence properties against OOD training samples.
\subsection{Uncertainty Weighted Actor-Critic}
\vspace{-1mm}
Instead of training the $Q$ function on Equation \ref{equation:Q-learning}, we train $Q_\theta$ on $\pi'$. For clarity, we denote the TD $Q$-target as in \citep{dqn,BEAR} by $Q_{\theta'}$.
\label{subsec:our_Q}
\begin{equation}\label{equation:Q}
\begin{split}
&\mathcal{L}(Q_\theta) \\ 
=&\E_{(s'|s,a)\sim \mathcal{D}}\E_{a'\sim \pi'(\cdot|s')}\left[Err(s, a, s', a')^2\right]\\
=&\E_{(s'|s,a)\sim \mathcal{D}}\E_{a'\sim \pi(\cdot|s')}\left[\frac{\beta}{Var\left[Q_{\theta'}(s',a')\right]} Err(s, a, s', a')^2\right]\\
& Err(s, a, s', a') = Q_\theta(s,a) - \left(R(s,a)+\gamma Q_{\theta'}(s',a')\right).
\end{split}
\end{equation}
We absorb the normalization factor $Z$ into $\beta$.
The resulting training loss down-weighs the Bellman loss for the $Q$ function by inverse the uncertainty of the $Q$-target ($Q_{\theta'}(s',a')$) that does track gradient. This directly reduces the effect that OOD backups has on the overall training process.

Similarly, we optimize the actor $\pi$ using samples from $\pi'$. Substituting $\pi(\cdot|s)$ by $\pi'(\cdot|s)$ in equation \ref{equation:bear}, we arrive at the following actor loss
\label{subsec:our_Pi}
\begin{equation}\label{equation:Pi}
\begin{split}
\mathcal{L}(\pi)&=-\E_{a \sim \pi'(\cdot|s)}\left[Q_\theta(s,a)\right]\\
                &=-\E_{a \sim \pi(\cdot|s)}\left[\frac{\beta}{Var\left[Q_\theta(s,a)\right]}Q_\theta(s,a)\right]
\end{split}
\end{equation}
The resulting actor loss intuitively decreases the probability of maximizing the Q function on OOD samples, further discouraging the vicious cycle of Q function explosion. Such loss further stabilizes $Q$ function estimations without constraints on the actor function distribution.

Algorithm~\ref{alg:opt} summarizes the proposed training curriculum, mostly the same as in the baseline \citep{BEAR}. Note that we do not propagate gradient through the uncertainty ($Var(y(s,a))$)
\vspace{-2mm}
\begin{algorithm}[h]
\centering
\caption{\small Pseudo code for UWAC, differences from \citep{BEAR} are \textcolor{red}{colored}}
\label{alg:opt}
\begin{algorithmic}[1]
\REQUIRE {Dataset $\mathcal{D}$, target network update rate $\tau$, mini-batch size $N$, sampled actions for MMD ($n=10$), \textcolor{red}{sample numbers stochastic forward passes ($T=100$)}, hyper-parameters: $\lambda$, $\alpha$, \textcolor{red}{$\beta$}}
\STATE Initialize Q networks $\{Q_{\theta_1}, Q_{\theta_2}\}$ with \textcolor{red}{MC Dropout}. Initialize actor $\pi_{\phi}$, target networks $\{Q_{\theta_1'},Q_{\theta_2'}\}$ and a target actor $\pi_{\phi'}$ , with $\phi'\gets \phi, \theta_{1,2}'\gets \theta_{1,2}$
\FOR {$t \gets 1$ to $N$}
    \STATE Sample mini-batch of transitions $(s,a,r,s')\sim \mathcal{D}$
    \STATE \textbf{Q-update:}
    \STATE Sample $p$ action samples, $\{a_i \sim \pi_{\phi'}(\cdot|s')\}^p_{i=1}$
    \STATE $y(s,a):=\max_{a_i} \left[\lambda \min(Q_{\theta_1'}(s',a_i),Q_{\theta_2'}(s',a_i))\right.$ $\left.+(1-\lambda) \max(Q_{\theta_1'}(s',a_i),Q_{\theta_2'}(s',a_i))\right]$
    \STATE \textcolor{red}{Calculate variance of the $y(s,a)$ through variance of $T$ stochastic samples from $Q_{\theta_1'},Q_{\theta_2'}$}
    \STATE Perform one step of SGD to minimize $\mathcal{L}(Q_{\theta_{1,2}}) = \textcolor{red}{\frac{\beta}{Var[y(s,a)]}}(Q_{\theta_{1,2}}(s,a)-(r+\gamma y(s,a)))^2$ 
    \STATE \textbf{Policy-update:}
    \STATE Sample actions $\{a_i \sim \pi_{\phi'}(\cdot|s')\}^m_{i=1}, \{a_j \sim \mathcal{D}\}^{n}_{i=1}$
    \STATE Update $\phi,a$ according to \textcolor{red}{equation \ref{equation:Pi}} with MMD penalty with weight $\alpha$ as in section \ref{subsec:bear}
    \STATE \textbf{Update Target Networks:} $\theta_{1,2}'\gets \tau \theta_{1,2}; \phi_i'\gets \tau \phi_i$
\ENDFOR
\end{algorithmic}
\end{algorithm}
\vspace{-4mm}
\section{Experimental Results}
\label{sec:results}
\vspace{-2mm}

\begin{figure}[h]
    \centering
    \vspace{-1mm}
    \includegraphics[trim={0 12.5cm 0 0},clip,width=0.45\textwidth, angle=0]{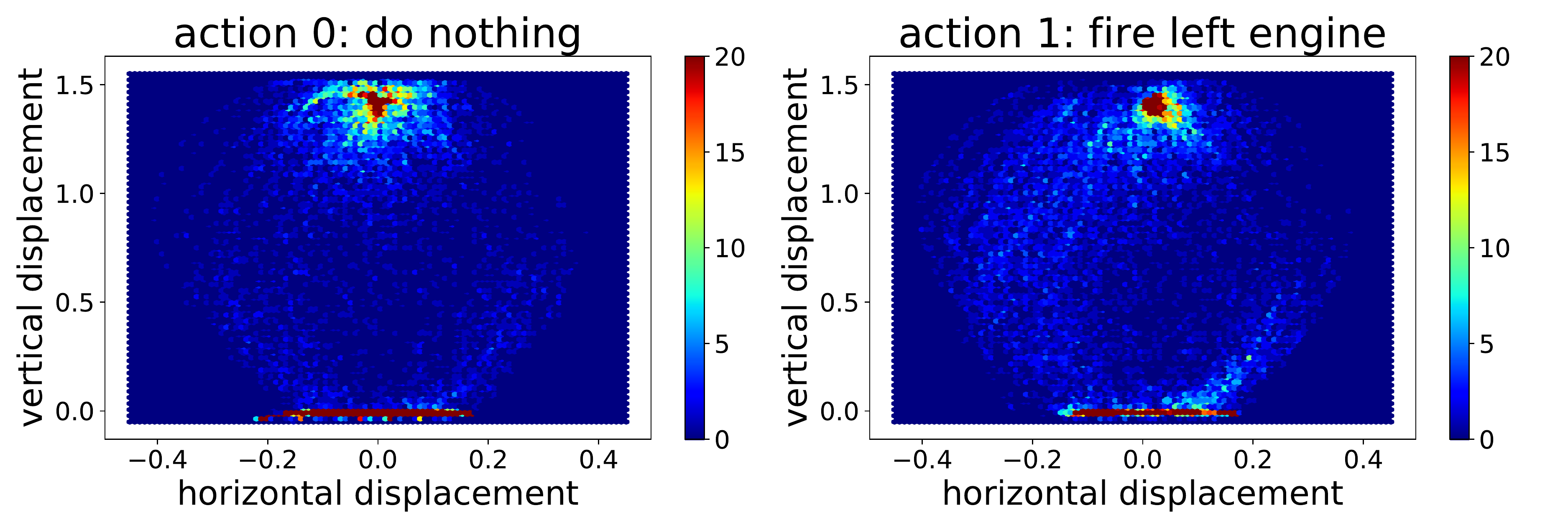}
    \includegraphics[trim={0 0 0 12.5cm},clip,width=0.45\textwidth, angle=0]{images/lunarlander/original_visualization_actions_LR_LunarLander-v2_9810_compressed.pdf}
    \vspace{-2mm}
    \caption{\small \textbf{Expert Trajectory Visualization.} 2D heat maps of the expert's action distribution with respect to horizontal/vertical displacement from the goal location. Warmer locations represent more observations.}
    \vspace{-3mm}
    \label{fig:lunarlander_original}
\end{figure} 

Our experiments are structured as follows: In section \ref{subsec:lunarlander}, we validate and visualize the effectiveness of dropout uncertainty estimation in RL. In section \ref{subsec:mujoco} we present competitive benchmarking results on the widely-used D4RL MuJoCo walkers dataset. We then experiment with the more complex Adroit hand manipulation environment in section \ref{subsec:adroit}, and analyze the training stability and the effectiveness against OOD samples by examining the Q target functions in section \ref{subsec:training}. We report the implementation details
in secti    on \ref{subsec:implementation}, ablation studies \ref{subsec:ablations}, and training time in \ref{subsection:training_time}.

\begin{figure}[h]
    \centering
    \vspace{-1mm}
    \includegraphics[width=0.45\textwidth, angle=0]{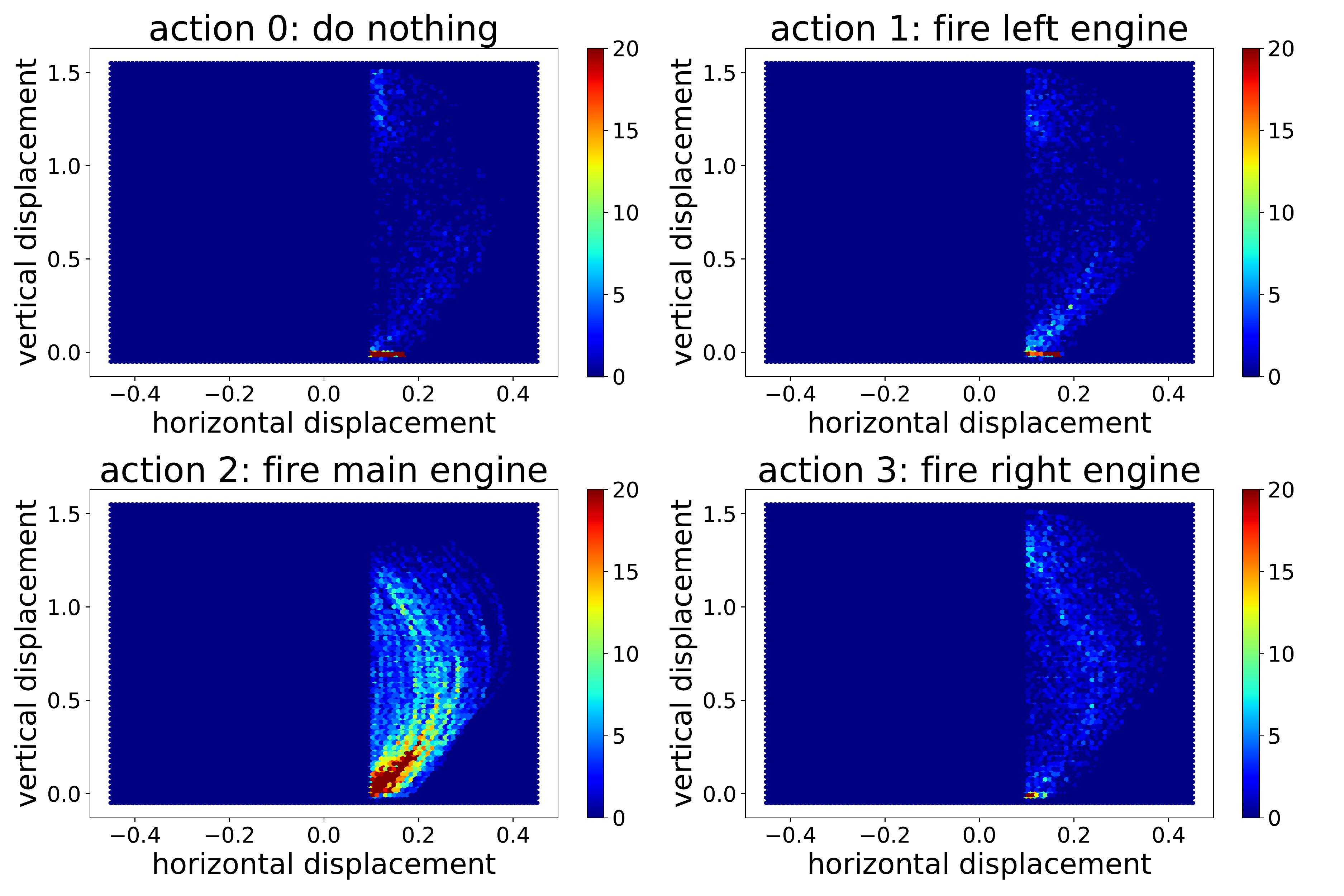}
    \includegraphics[width=0.45\textwidth, angle=0]{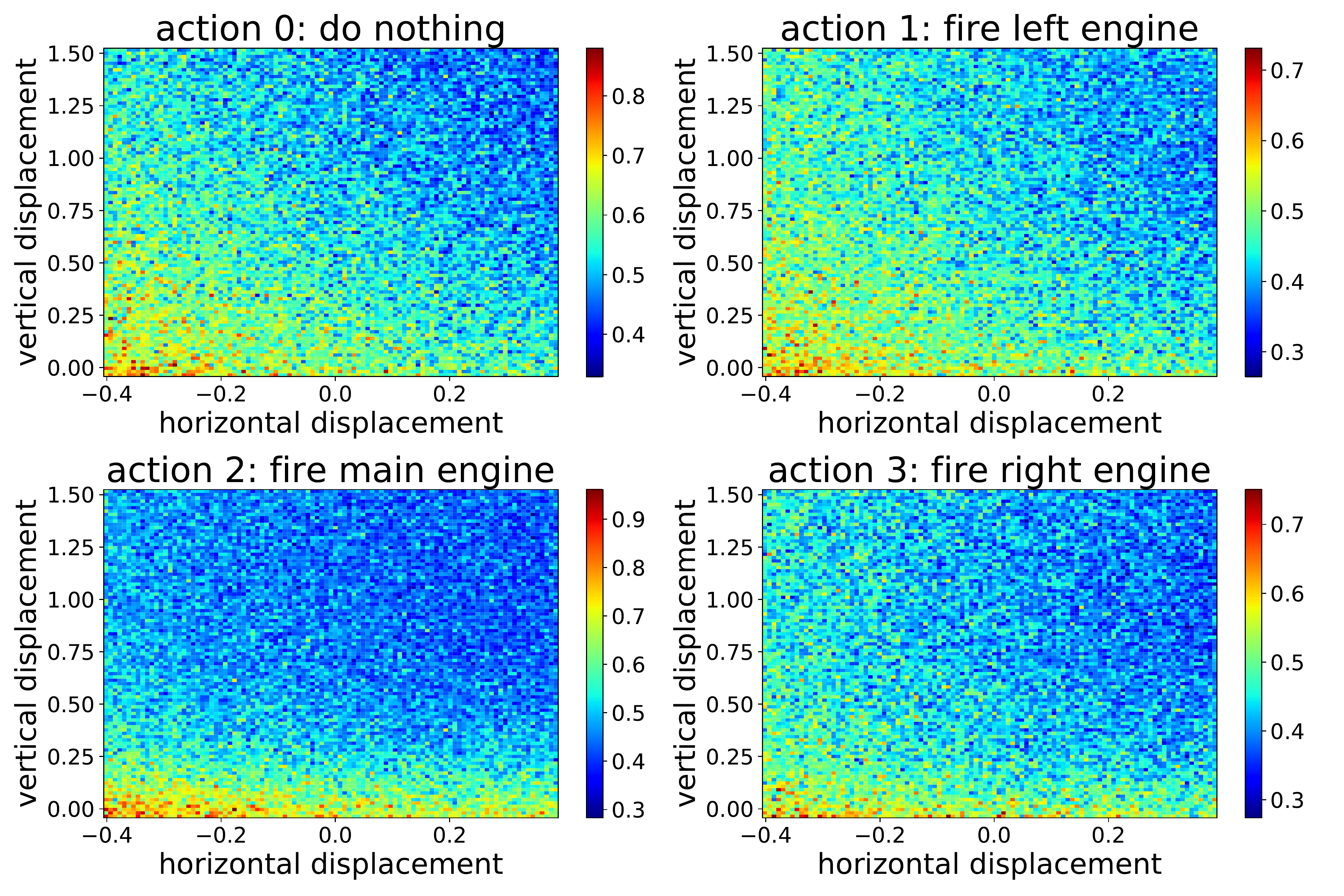}
    \vspace{-3mm}
    \caption{\small  \textbf{Top.} The training set with horizontal displacements ($<0.1$) removed. This makes all states on the left OOD. \textbf{Bottom.} Our model estimates higher uncertainty (brighter color) on the left and lower uncertainty (colder color) on the right. \\We visualize the heatmap with the average speed of the lander, which is faster than observations at the bottom of the map. As a result, Fig~\ref{fig:lunarlander_original} does not represent the actual frequency of training data, and the uncertainty should be compared horizontally, not vertically.} 
    \vspace{-3mm}
    \label{fig:lunarlander_LR}
\end{figure}

\subsection{Dropout Uncertainty Estimation for Reinforcement Learning}
\vspace{-1mm}
\label{subsec:lunarlander}
For the ease of 2D-visualization, we firstly investigate the effectiveness of MC dropout for uncertainty estimation on the OpenAI gym LunarLander-v2 environment. The LunarLander-v2 environment features a lunar lander agent trying to land at a goal location in a 2D world (between two yellow flags) with 4 actions \{do nothing, fire left engine, fire downward engine, fire right engine\}. 

We generate the expert offline dataset from the final replay buffer (size 100,000) of a fully trained expert AWR \citep{awr} agent with average reward $270$. Note that the state-action distribution has a relatively complete coverage over the observation space (Fig. \ref{fig:lunarlander_original}).

To simulate the scenario in most offline datasets, where the agent encounters lots of out-of-distribution states and actions, we create two skewed datasets by removing all observations from the upper-half or the leftmost-half according to displacement from objective. We visualize the clipped datasets distribution together with the estimated Q function uncertainty in Figure \ref{fig:lunarlander_LR},\ref{fig:lunarlander_UD}. Our proposed framework reports higher uncertainty estimates at locations where the observations are sparse, especially where the observations are removed (OOD states). The results demonstrate the effectiveness of our proposed method at estimating the uncertainty of the Q function.

Additionally, in some benchmarking experiments, we observe lower uncertainty estimates of state-action pairs in the training set than states from the training set paired with random actions (as in Figure \ref{fig:uncertainty_walker} for the walker2d-expert task). This further validates the use of MC dropout as a way to detect OOD state-action pairs.

\begin{figure}[h!]
    \centering
    \vspace{-1mm}
    \includegraphics[width=0.45\textwidth, angle=0]{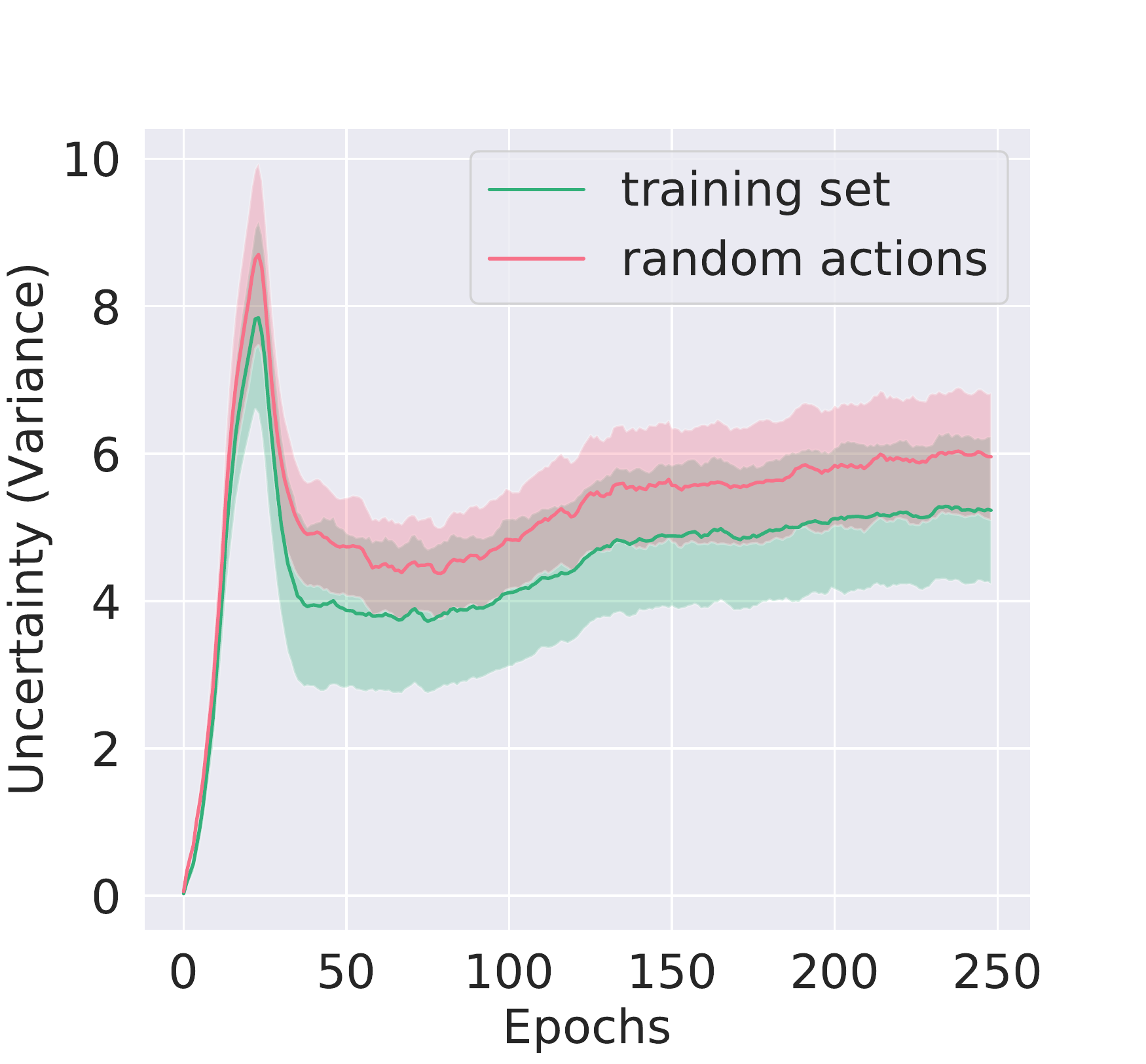}
    \vspace{-3mm}
    \caption{\small Uncertainty (estimated as variance) of state-action pairs from the (walker2d-expert) training dataset (green) compared to uncertainty estimates of the states combined with random actions from the same dataset.\\ Since the action space for robotic control is quite small and noisy, a lot of random actions are actually in-distribution. Although the regions overlap, we achieve a ROC/AUC score of 0.845 for identifying OOD actions.}
    \vspace{-4mm}
    \label{fig:uncertainty_walker}
\end{figure}

\subsection{Performance on standard benchmarking datasets for offline RL}
\vspace{-1mm}
\label{subsec:mujoco}
We evaluate our method on the MuJoCo datasets in the D4RL benchmarks \citep{fu2020d4rl}, including three environments (halfcheetah, hopper, and walker2d) and five dataset types (random, medium, medium-replay, medium-expert, expert), yielding a total of 15 problem settings. The datasets in this benchmark have been generated as follows: \textbf{random}: roll out a randomly initialized policy for 1M steps. \textbf{expert}: 1M samples from a policy trained to completion with SAC. \textbf{medium}: 1M samples from a policy trained to approximately 1/3 the performance of the expert. \textbf{medium-replay}: replay buffer of a policy trained up to the performance of the medium agent. \textbf{medium-expert}: 50-50 split of medium and expert data.

\begin{table*}[t]
\centering
\caption{Normalized Average Returns of UWAC v.s. previous state-of-the-arts (BEAR, CQL, MOPO) and random ensemble mixtures (REM), and AlgaeDICE (aDICE) on the D4RL MuJoCo Gym dataset according to \citep{fu2020d4rl}. We report the average over 5 random seeds ($\pm$ standard deviation). BEAR, CQL do not report standard deviation. We omit BRAC-p and SAC-off because they do not obtain performance meaningful for comparison. We \textbf{bold} the highest mean.}
\label{table:walkers_full}
\vspace{-3mm}
{\small
\begin{tabular}{llllllllllll}
\toprule
Task Name                 & UWAC (OURS) & MOPO & MOReL & BEAR & BRACv & AWR  & BCQ   & BC    & CQL & REM & aDICE\\
\midrule
halfcheetah-random        & 14.5 $\pm$ 3.3      & 35.4 $\pm$ 2.5   & 25.6 & 25.1 & 31.2   & 2.5  & 2.2   & 2.1   & \textbf{35.4} & -2.6 & -0.3\\
walker2d-random           & 15.5 $\pm$ 11.7     & 13.6 $\pm$ 2.6    & \textbf{37.3}  & 7.3  & 1.9    & 1.5  & 4.9   & 1.6   & 7 & -0.3 & 0.5\\
hopper-random             & 22.4 $\pm$ 12.1     & 11.7 $\pm$ 0.4    & \textbf{53.6} & 11.4 & 12.2   & 10.2 & 10.6  & 9.8   & 10.8 & 0.7 & 0.9\\
halfcheetah-medium        & \textbf{46.5} $\pm$ 2.5      & 42.3 $\pm$ 1.6    & 42.1 & 41.7 & 46.3   & 37.4 & 40.7  & 36.1  & 44.4 & -2.6 & -2.2\\
walker2d-medium           & 57.5 $\pm$ 7.8      & 17.8 $\pm$ 19.3      & 77.8 & 59.1 & \textbf{81.1}   & 17.4 & 53.1  & 6.6   & 79.2 & -0.2 & 0.3\\
hopper-medium             & 88.9 $\pm$ 12.2     & 28.0 $\pm$ 12.4      & \textbf{95.4} & 52.1 & 31.1   & 35.9 & 54.5  & 29.0  & 58 & 0.6 & 1.2 \\
halfcheetah-med-replay & 46.8 $\pm$ 3.0      & \textbf{53.1} $\pm$ 2.0      & 40.2 & 38.6 & 47.7      & 40.3 & 38.2  & 38.4  & 46.2 & -3.0 & -2.1\\
walker2d-med-replay    & 27.0 $\pm$ 6.3      & 39.0 $\pm$ 9.6     & \textbf{49.8} & 19.2 & 0.9    & 15.5 & 15.0  & 11.3  & 26.7 & -0.2 & 0.6\\
hopper-med-replay      & 39.4 $\pm$ 6.1      & 67.5 $\pm$ 24.7      & \textbf{93.6} & 33.7 & 0.6    & 28.4 & 33.1  & 11.8  & 48.6 & 0.8 & 1.1\\
halfcheetah-med-expert & \textbf{127.4} $\pm$ 3.7      & 63.3 $\pm$ 38.0  &   53.3    & 53.4  & 41.9   & 52.7 & 64.7  & 35.8  & 62.4 & -2.6 & -0.8\\
walker2d-med-expert    & \textbf{99.7} $\pm$ 12.2     & 44.6 $\pm$ 12.9    &  95.6 & 40.1 & 81.6   & 53.8 & 57.5  & 6.4   & 98.7 & -0.2 & 0.4\\
hopper-med-expert      & \textbf{134.7} $\pm$ 21.2     & 23.7 $\pm$ 6.0    & 108.7 & 96.3 & 0.8    & 27.1 & 110.9 & 111.9 & 111 & 0.7 & 1.1\\
halfcheetah-expert & \textbf{128.6} $\pm$ 2.9      & -  & -        & 108.2  & -1.1   & - & -  & 107  & 104.8 & - & - \\
walker2d-expert    & 121.1 $\pm$ 22.4     & -     & -     & 106.1  & 0   & - & -  & 125.7   & \textbf{153.9} & - & - \\
hopper-expert      & \textbf{135.0} $\pm$ 14.1     & -     & -     & 110.3 & 3.7   & - & - & 109 & 109.9 & - & - \\
\bottomrule
\end{tabular}
}
\end{table*}

Results are shown in Table \ref{table:walkers_full}. Our method is the strongest by a significant margin on all the medium-expert datasets and most of the medium-expert datasets, and also achieves good performance on all of the random and medium datasets, where the datasets lack state/action diversity. Our approach performs less well on the medium-replay datasets compared to model based method (MOPO) because model-based methods typically perform well on datasets with diverse state/action.

\subsection{Performance on Adroit hand dataset with human demonstrations}
\vspace{-1mm}
\label{subsec:adroit}

We then experiment with a more complex robotic hand manipulation dataset. The Adroit dataset in the D4RL benchmarks \citep{adroit} involves controlling a 24-DoF simulated hand to perform 4 tasks including hammering a nail, opening a door, twirling a pen, and picking/moving a ball. This dataset is particularly hard for previous state-of-the-art works in that it contains of narrow human demonstrations on a high-dimensional robotic manipulation task. 

\begin{figure}[h!]
    \centering
    \vspace{-2mm}
    \includegraphics[trim={5.5cm 10.5cm 6.85cm 8.1cm},clip,width=0.95\columnwidth, angle=0]{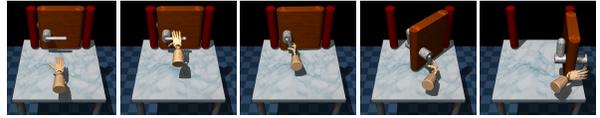}
    \vspace{-2mm}
    \caption{\small Our learned policies successfully accomplishes manipulation tasks, such as opening a door as shown.}
    \vspace{-3mm}
    \label{fig:adroit_door}
\end{figure}

The dataset contains three types of datasets for each task. \textbf{human}: a small amount of demonstration data from a human; \textbf{expert}: a large amount of expert data from a fine-tuned RL policy; \textbf{cloned}: the third dataset is generated by imitating the human data, running the policy, and mixing data at a 50-50 ratio with the demonstrations. It is worth noting that mixing (for cloned) is performed because the cloned policies themselves do not successfully complete the task, making the dataset otherwise difficult to learn from \citep{fu2020d4rl}.

\begin{table*}[ht]
\centering
\caption{Normalized Average Returns on the D4RL Adroit dataset in the same format as Table \ref{table:walkers_full}, over 5 random seeds ($\pm$ standard deviation). We omit BRAC-p, BRAC-v because they do not obtain performance meaningful for comparison.}
\label{table:adroit_full}
\vspace{-3mm}
{\small
\begin{tabular}{lllllllllllll}
\toprule
Task Name   & UWAC (OURS) & BEAR  & BC    & SAC-off & CQL($\mathcal{H}$) & CQL($\rho$) & AWR   & BCQ  & SAC-on & REM & aDICE\\
\midrule
pen-human       & 65.0 $\pm$ 3.0      & -1.0  & 34.4  & 6.3     & 37.5    & 55.8    & 12.3  & \textbf{68.9}  & 21.6  & 3.5 & -3.3 \\
hammer-human    & \textbf{8.3} $\pm$ 7.9      & 0.3   & 1.5   & 0.5     & 4.4    & 2.1    & 1.2   & 0.5   & 0.2 & 0.2 & 0.3 \\
door-human      & \textbf{10.7} $\pm$ 5.5      & -0.3  & 0.5   & 3.9     & 9.9   & 9.1   & 0.4   & 0.0   & -0.2 & -0.1 & -0.0 \\
relocate-human  & \textbf{0.5} $\pm$ 0.6      & -0.3  & 0.0   & 0.0     & 0.2   & 0.4   & 0.0   & -0.1  & -0.2  & -0.2 & -0.1 \\
pen-cloned      & 45.1 $\pm$ 5.8      & 26.5  & \textbf{56.9}  & 23.5    & 39.2    & 40.3   & 28.0  & 44.0  & 21.6 & -3.4 & -2.9\\
hammer-cloned   & 1.2 $\pm$ 3.4      & 0.3   & 0.8   & 0.2     & 2.1    & \textbf{5.7}    & 0.4   & 0.4   & 0.2  & 0.2 & 0.3\\
door-cloned     & 1.2 $\pm$ 3.6      & -0.1  & -0.1  & 0.0     & 0.4   & \textbf{3.5}   & 0.0   & 0.0   & -0.2 & -0.1 & 0.0\\
relocate-cloned & \textbf{0.0} $\pm$ 0.2      & -0.3  & -0.1  & -0.2    & -0.1   & -0.1   & -0.2  & -0.3  & -0.2 & -0.2 & -0.3 \\
pen-expert      & \textbf{119.8} $\pm$ 4.1      & 105.9 & 85.1  & 6.1     & -   & -   & 111.0 & 114.9 & 21.6 & 0.3 & -3.5 \\
hammer-expert   & \textbf{128.8} $\pm$ 4.8      & 127.3 & 125.6 & 25.2    & -    & -   & 39.0  & 107.2 & 0.2 & 0.2 & 0.3\\
door-expert     & \textbf{105.4} $\pm$ 2.1      & 103.4 & 34.9  & 7.5     & -   & -  & 102.9 & 99.0  & -0.2 & -0.2 & 0.0\\
relocate-expert & \textbf{108.7} $\pm$ 1.7      & 98.6  & 101.3 & -0.3    & -   & -   & 91.5  & 41.6  & -0.2 & -0.1 & -0.1\\
\bottomrule
\end{tabular}
}
\end{table*}

Results are shown in Table~\ref{table:adroit_full}. UWAC achieves significant improvement on the baseline (BEAR) \citep{BEAR} on all the ``human" demonstration datasets, where the datasets lacks state/action diversity and the agent will encounter lots of OOD backups during training. We also obtain state-of-the art performance all other datasets in Adroit.

\subsection{Analysis of Training Dynamics}
\vspace{-1mm}
\label{subsec:training}
\begin{figure*}[ht]
    \centering
    \vspace{-2mm}
    \includegraphics[trim={0 3.2cm 2cm 4.5cm}, clip, width=0.49\textwidth]{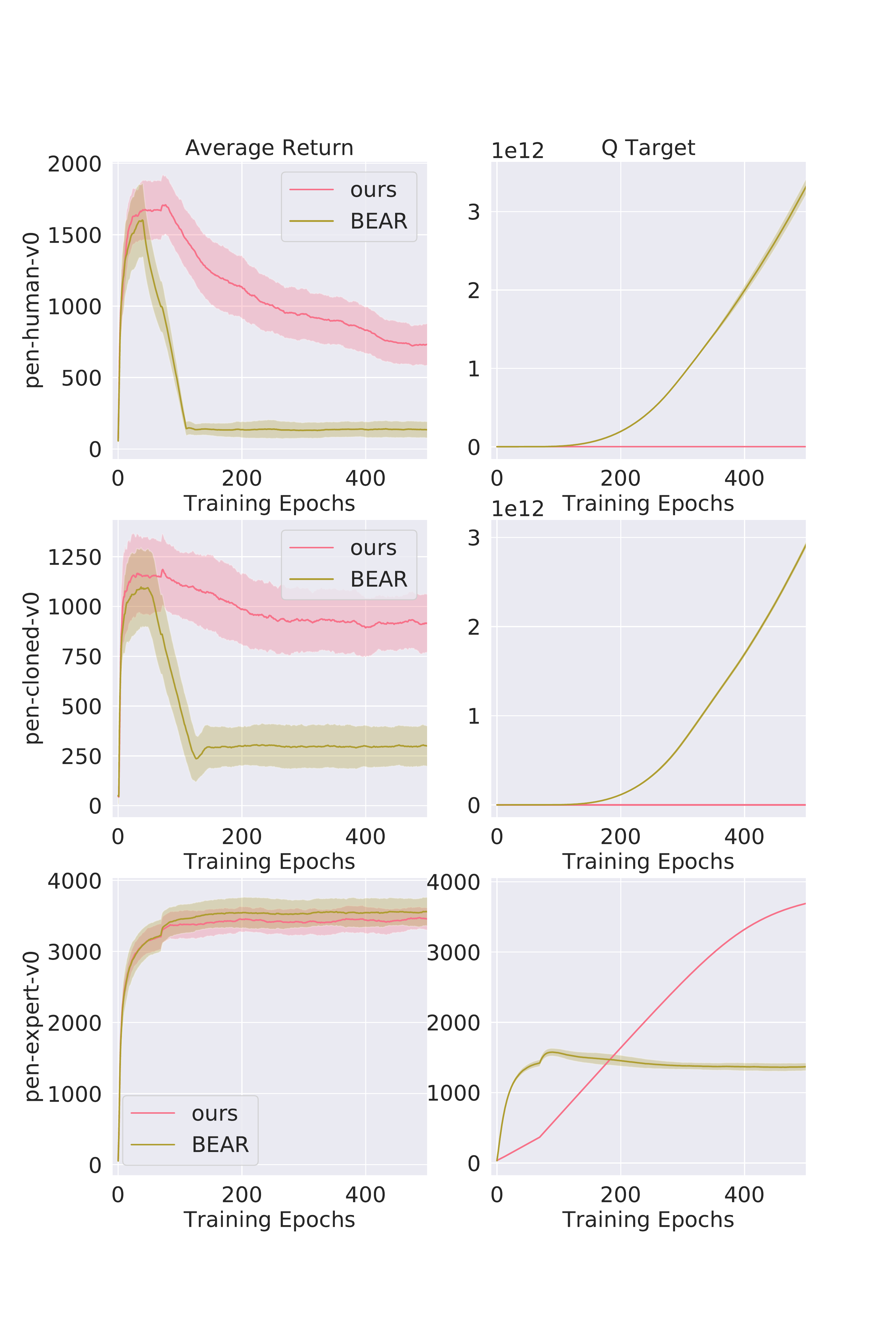} 
    \includegraphics[trim={0 3.2cm 2cm 4.5cm}, clip, width=0.49\textwidth]{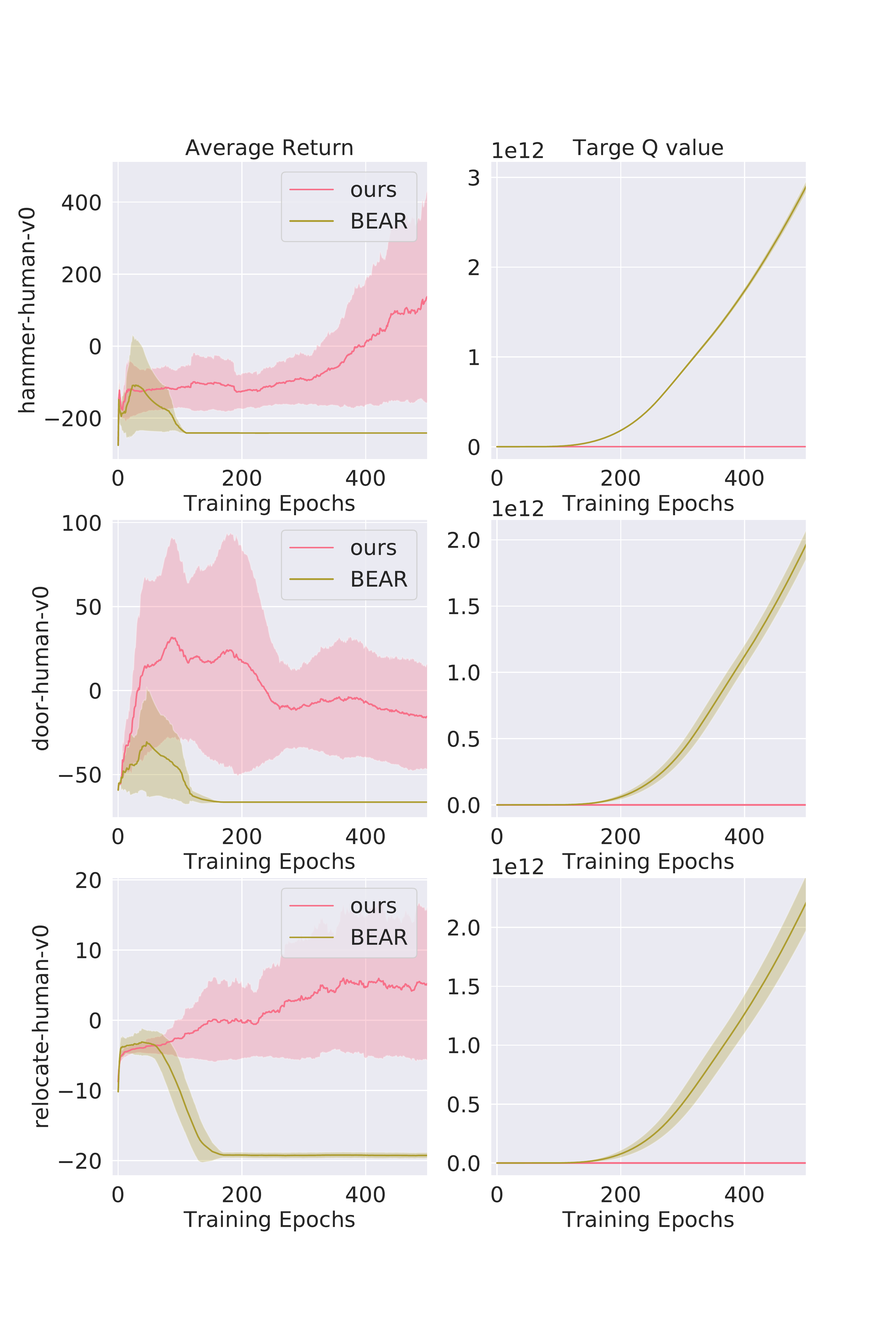} 
    \vspace{-3mm}
    \caption{\small Plot of average return v.s. training epochs, together with the corresponding average Q Target over training epochs. Results are averaged across 5 random seeds. \textbf{Left:} Results of different types (human, cloned, expert) on the Adroit pen task. \textbf{Right:} Results on human demos on the 3 remaining tasks. The performance of baseline (BEAR) degrades over time (also noted in \citep{BEAR}), and the Target $Q$ value explodes. }
    \vspace{-4mm}
    \label{fig:adroit_training}
\end{figure*}

Although the baseline method BEAR \citep{BEAR} already improves offline RL training stability on most of the MuJoCo Walkers dataset, we observe significantly worse training stability when training BEAR on the more complex Adroit hand dataset, especially on demonstrations collected from a narrow policy (i.e. human demonstrations). We show some selected results in Figure \ref{fig:adroit_training}. 

Note that on 5 of the 6 panels shown, the performance of BEAR drops after obtaining peak very early on into training, and sometimes even falls back to initial performance. We also observe similar behavior in all other environments, see full adroit results in Figure \ref{fig:adroit_full} in the Appendix. Additionally, we observe strong correlation between the training instability and the explosion of $Q$ values. All performance drops begin at within 5 epochs when $Q$ target estimate greatly exceeds the average return. We attribute the problem of $Q$ function over-estimation and explosion to performing backups from OOD states and actions: As performance improves initially, the OOD $Q$ estimates increases together with the average $Q$ estimates. Since the agent is unable to explore on the OOD actions/states, any over-estimation on the OOD samples can further increase average $Q$ estimates through the Bellman backups, causing a vicious cycle leading to $Q$ value explosion.

In the initial stages of training, the performance of UWAC increases together with the baseline. By down-weighting the OOD backups, UWAC breaks the vicious cycle, and maintains meaningful $Q$ estimates throughout training. This allows UWAC to further train on the offline dataset and surpass BEAR after the performance drop and maintain positive performance.

\subsection{Implementation Details}
\label{subsec:implementation}
\textbf{LunarLander:}
We set our expert to be a simple 3-layer actor-critic agent trained to completion with \citep{awr}. We take the final replay buffer (size 100,000) with average reward of $269.7$. The vertically clipped dataset in Figure \ref{fig:lunarlander_UD} contains 76,112 samples, and the horizontally clipped dataset in Figure \ref{fig:lunarlander_LR} contains 21,038 samples.

We then train a simple 3-layer actor-critic off-policy agent on the clipped datasets according to Algorithm \ref{alg:opt} (we do not take the MMD loss in line 11 to enlarge the effect of OOD samples). 

\textbf{Baseline (BEAR):}
We ran benchmarks on the official GitHub code\footnote{\href{https://github.com/aviralkumar2907/BEAR}{github.com/aviralkumar2907/BEAR}} of BEAR and the updated version\footnote{\href{https://github.com/rail-berkeley/d4rl\_evaluations}{github.com/rail-berkeley/d4rl\_evaluations}} provided by the authors. We ran parameter search on all the recommended parameters kernel\_type$\in$\{gaussian, laplacian\}, mmd\_sigma$\in$\{10,20\}, $100$ actions sampled for evaluation, and $0.07$ being the mmd\_target\_threshold. We are able to reproduce the results reported in \citep{fu2020d4rl} with both the official GitHub and the updated version.

\textbf{Our method (UWAC):}
We apply our weighted loss to Algorithm \ref{alg:opt} to the updated BEAR code provided by \citet{BEAR}. We keep the hyper-parameters and the network architecture exactly the same as in BEAR. For experiments on the Adroit hand environment, we further enforce Spectral Norm on the Q function for better stability similar to \citep{MOPO} and theoretical guarantee as shown in Appendix \ref{subsec:pf}. We clip the inverse variance to within the range of $(0.0, 1.5)$ for numerical stability. 

For the choice of $\beta$ in Algorithm \ref{alg:opt}, we swept over values from the set $\{0.8, 1.6, 2.5\}$, determined by matching the average uncertainty output during training time. We found that the model is quite robust against $\beta$: $0.8, 1.6$ gave similarly good performance across all tasks in our experiments. We also note that $\beta$ can be absorbed into the learning rate since it acts both on the actor loss and critic loss. However, since the MMD loss from BEAR is not $\beta$-weighted, we make the design choice to tune $\beta$ in stead of the MMD weight $\alpha$.

\subsection{Ablations}
\label{subsec:ablations}
\textbf{Spectral Normalization:} Our first study isolates the effect of Spectral Norm on the performance. Although BEAR + Spectral Norm enforces a bounded $Q$ function and maintains good training stability, Spectral Norm does not handle OOD backups on the narrow Adroit datasets. We discover experimentally that BEAR+SN performs much worse than BEAR only, we plot the complete results of BEAR+SN v.s. BEAR in Figure \ref{fig:adroit_abl}.

\textbf{Dropout/Ensembles for Regularization:} Our second study isolates the effect of Dropout on the performance as a regularizer, since dropout alone does not handle OOD backups on the narrow Adroit datasets. We observe experimentally that UWAC without uncertainty weighing (BEAR+Dropout+Spectral Norm) does not change the behavior of BEAR under Spectral Norm (Figure \ref{fig:adroit_abl1}) and performs worse than UWAC (Figure \ref{fig:adroit_ablo}) and the original BEAR (Figure \ref{fig:adroit_ablA}). In addition, we note that ensemble based methods like REM \citep{agarwal2020optimistic} alone achieves bad performance on the Adroit environment (Table \ref{table:walkers_full},\ref{table:adroit_full}).

\textbf{Replacing Dropout with Ensembles:} To verify the generalization of UWAC to uncertainty estimation methods beyond dropout, we applied UWAC loss to ensembles trained under REM \citep{agarwal2020optimistic} and average-DQN ensembles \citep{anschel2017averaged}. In both cases, UWAC still outperforms baseline (BEAR). We notice that dropout has very similar performance as average-DQN ensembles on the Adroit dataset (Figure \ref{fig:adroit_REM}).

\textbf{Down-weigh by Variance v.s. Standard Deviation:} We notice no significant difference in behavior down-weighing using standard deviation or variance (Figure \ref{fig:adroit_std}).
\section{Conclusion and Future Work}
\label{section:conclusion}
\vspace{-2mm}
In this work, we have leveraged uncertainty estimation to detect and down-weight OOD backups in the Bellman squared loss for offline RL. We show our proposed technique, UWAC, achieves superior performance and improved training stability, without introducing any additional model or losses. Furthermore, we experimentally demonstrate the effectiveness of dropout uncertainty estimation at detecting OOD samples in offline RL. UWAC also can be applied to stabilize other actor-critic methods, and we leave the investigation to future works.

In addition, our work demonstrates a valuable application of uncertainty estimation in RL. Future works can combine model-based and model-free methods for offline or off-policy RL and use uncertainty estimation to decide when to use the model to train the actor. Additionally, uncertainty estimation may be used to guide curiosity based RL agents for on-policy curiosity-based learning.



\clearpage

\bibliography{citations}
\bibliographystyle{icml2021}

\clearpage

\appendix
\section{Appendix}
\label{section:apendix}
\subsection{Analysis for Convergence Properties}
\label{subsec:pf}
We first extend the convergence theorem (Theorem B.1) from \citep{BEAR} to obtain a bound on the approximation error with respect to bellman approximation error $\delta(s,a)$ -- Theorem A.1. 

Note that Theorem A.1 alone does not imply convergence, since $\delta(s,a)$ can be arbitrarily large due to OOD estimates in offline RL. We then show (in Theorem A.2) that $\delta(s,a)$ can be bounded by any constant under mild assumptions that the $Q$ function is Lipschitz continuous. Theorem A.2 shows our framework converges w.r.t. OOD samples.
\begin{theorem}
\label{theorem:contraction} 
Suppose we run approximate distribution-constrained value iteration with a set constrained backup $\mathcal{T}^\Pi$ on a set of policies $\Pi$. Let $\delta(s,a)$ be the upper-bound for the Bellman approximation error for a given state-action pair $(s,a)$ over $k$ training steps: $\delta(s,a)= \sup_k \left| Q_k(s,a) - \mathcal{T}^\Pi Q_{k-1}(s,a)\right|$. Then,
\begin{align*}
&\lim _{k \rightarrow \infty} \mathbb{E}_{\rho_{0}}\left[\left|V_{k}(s)-V^{*}(s)\right|\right] \\
\leq &\frac{\gamma}{(1-\gamma)^{2}}\left[C(\Pi) \mathbb{E}_{\mu}\left[\max _{\pi \in \Pi} \mathbb{E}_{\pi}[\delta(s, a)]\right]+\frac{1-\gamma}{\gamma} \alpha(\Pi)\right]
\end{align*}
with the suboptimality constant ($\alpha(\Pi)$) and the concentrability coefficient defined as:
\begin{align*}
\alpha(\Pi)&=\max _{s . a}\left|\mathcal{T}^{\Pi} Q^{*}(s, a)-\mathcal{T} Q^{*}(s, a)\right|; \\
C(\Pi) &\stackrel{\text { def }}{=}(1-\gamma)^{2} \sum_{k=1}^{\infty} k \gamma^{k-1} c(k)
\end{align*}
\end{theorem}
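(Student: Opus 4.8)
The plan is to follow the error-propagation analysis of approximate value iteration, adapted to the distribution-constrained backup $\mathcal{T}^\Pi$, and — as the extension over \citet{BEAR}'s Theorem~B.1 — to carry the per-step approximation error pointwise through $\delta(s,a)$ rather than through a single scalar bound. Throughout I would use only two structural facts about the constrained backup $\mathcal{T}^\Pi Q(s,a) = R(s,a) + \gamma\,\mathbb{E}_{s'}[\max_{\pi\in\Pi}\mathbb{E}_{a'\sim\pi}Q(s',a')]$: that it is a $\gamma$-contraction in the supremum norm, and that it is monotone.

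First I would isolate the one-step error. Writing $\zeta_k(s,a) = Q_k(s,a) - \mathcal{T}^\Pi Q_{k-1}(s,a)$, the hypothesis gives $|\zeta_k| \le \delta$ pointwise for every $k$. I then decompose the deviation from optimality, using $Q^* = \mathcal{T}Q^*$, as
\begin{align*}
Q^* - Q_k = \underbrace{(\mathcal{T}Q^* - \mathcal{T}^\Pi Q^*)}_{\le\,\alpha(\Pi)} + \underbrace{(\mathcal{T}^\Pi Q^* - \mathcal{T}^\Pi Q_{k-1})}_{\text{contraction term}} - \zeta_k .
\end{align*}
The first bracket is bounded pointwise by the suboptimality constant $\alpha(\Pi)$ by its very definition, and the middle bracket is dominated by $\gamma P^{\pi_k}|Q^*-Q_{k-1}|$ for the policy $\pi_k\in\Pi$ attaining the constrained maximum, by monotonicity of the backup. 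Iterating this recursion in $k$ produces a telescoping expression in which $|Q^*-Q_k|$ is dominated by a discounted sum $\sum_{i=1}^{k}\gamma^{k-i}(P^{\pi})^{\,k-i}\bigl(\delta + \tfrac{1-\gamma}{\gamma}\alpha(\Pi)\bigr)$ of the accumulated errors, transported forward by products of the policy-induced transition kernels. Taking $\mathbb{E}_{\rho_0}$ and passing to the limit $k\to\infty$ converts the geometric weights into the prefactor $\gamma/(1-\gamma)^2$.

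The crux — and the step I expect to be the main obstacle — is the change of measure. The accumulated errors naturally live under the distribution obtained by pushing $\rho_0$ through $k-i$ transition steps, whereas the statement bounds them under the fixed data distribution $\mu$. I would introduce the concentrability coefficient $c(k)$ to relate these, bounding each $(P^{\pi})^{j}\rho_0$ by $c(j)\,\mu$, and then recognize the resulting double sum over $k$ and $i$ as exactly $(1-\gamma)^{-2}\sum_k k\gamma^{k-1}c(k) = C(\Pi)/(1-\gamma)^2$. The inner $\max_{\pi\in\Pi}$ in $\mathbb{E}_\mu[\max_{\pi}\mathbb{E}_\pi[\delta]]$ arises because each transported error term is dominated by the worst-case policy in $\Pi$ applied to $\delta$, and care is needed to pull the maximum outside the expectation consistently across iterations. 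Finally I would collect the $\delta$ and $\alpha(\Pi)$ contributions separately: the $\alpha(\Pi)$ source, being a constant per step, sums to $\alpha(\Pi)/(1-\gamma)$, which combines with the prefactor into the $\tfrac{1-\gamma}{\gamma}\alpha(\Pi)$ term, while the $\delta$ source combines with $C(\Pi)$, yielding the stated bound.
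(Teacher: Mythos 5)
Your outline reproduces the standard error-propagation analysis of approximate (distribution-constrained) value iteration --- the pointwise decomposition via $\alpha(\Pi)$ and the contraction term, telescoping through the policy-induced kernels, and the change of measure via the concentrability coefficients $c(k)$ --- which is exactly what the paper relies on: it gives no proof of its own and simply states that the result is a direct modification of Theorem~3 of Farahmand et al.\ (2010) or Theorem~1 of Munos (2003). The only nit is a bookkeeping slip near the end (the double sum equals $\sum_k k\gamma^{k-1}c(k) = C(\Pi)/(1-\gamma)^2$, not $(1-\gamma)^{-2}\sum_k k\gamma^{k-1}c(k)$), which does not affect the argument.
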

The proof of the theorem is a direct modification of the contraction proof in Theorem 3 of \citep{farahmand2010error} or Theorem 1 of \citep{munos2003error}.

The \textit{suboptimality constant} ($\alpha(\Pi)$) captures how far $\pi ^*$ is from $\Pi$, namely the suboptimality of the actor. The \textit{concentrability coefficient} quantifies how far the visitation distribution generated by policies from $\Pi$ is from the training data distribution, namely the degree to which the training may encounter OOD actions and states. \citet{BEAR} note a trade-off between $\alpha(\Pi)$ and $C(\Pi)$, and propose to bound both terms by constraining $\Pi$ to the set of policies with support the same as the training set policy with MMD loss.

However, the most important Bellman approximation error term which is the root cause of the bootstrapping problem is still left unbounded.
We proceed to show that for $\pi'(a|s) = \frac{\beta}{\sup_k\sqrt{Var[Q_k(s,a)]}}\pi(a|s)/Z$. Assuming that $Z\geq 1$, and that $Q$ is bounded, we can bound the Bellman error term $\max_{\pi'} \E_{\pi'}\left[\delta(s,a)\right]$ by any constant $C$ with arbitrarily high probability by optimizing on $\pi'$. 

Note that Theorem \ref{theorem:bound} considers down-weighting by inverse the square-root of the variance (standard deviation), which is different from the inverse variance in Equation \ref{equation:pi'},\ref{equation:Q},\ref{equation:Pi} and Algorithm \ref{alg:opt}. Down-weighing by the variance has the same practical effect since we clip the ratio for numerical stability. We adopt variance for practical implementation for the ease of tracing after multiple max,min,summation operations in Algorithm \ref{alg:opt}.

\begin{theorem}
\label{theorem:bound}
Let $\pi'(a|s) = \frac{\beta}{\sup_k\sqrt{Var[Q_k(s,a)]}}\pi(a|s)/Z(s)$, with the normalization factor $Z(s) = \int_{a} \frac{\beta}{Var\left[Q(s,a)\right]} \pi(a|s)$ as in equation \ref{equation:pi'}. Assume that 1) $\forall s: Z(s)\geq 1$ and 2) $Q$ is bounded ($\forall s,a: |Q(s,a)|\leq Q_m$). 

Then for any $C \in \R$, there exists $\beta, K$ such that
\[P\left(\max_{\pi'} \E_{\pi'}\left[\delta(s,a)\right]\geq C\right)\leq \frac{1}{K^2}\]
\end{theorem}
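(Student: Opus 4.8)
The plan is to apply Chebyshev's inequality to the stochastic $Q$-estimate $Q_k(s,a)$ and to exploit the exact cancellation between the standard deviation in the denominator of $\pi'$ and the Chebyshev deviation bound. For a fixed state $s$ I would first unfold the weighted expectation:
\begin{equation*}
\E_{\pi'}[\delta(s,a)] = \int_a \pi'(a|s)\,\delta(s,a)\,da = \frac{\beta}{Z(s)}\int_a \frac{\delta(s,a)}{\sup_k\sqrt{Var[Q_k(s,a)]}}\,\pi(a|s)\,da,
\end{equation*}
so that the whole problem reduces to controlling the normalized error $\delta(s,a)/\sup_k\sqrt{Var[Q_k(s,a)]}$ pointwise in $(s,a)$.

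Second, I would treat $Q_k(s,a)$ as the random object produced by MC dropout, identifying its mean with the Bellman target $\mathcal{T}^\Pi Q_{k-1}(s,a)$ and its variance with $Var[Q_k(s,a)]$ (finite by the boundedness hypothesis $|Q|\leq Q_m$). Chebyshev's inequality then gives, for fixed $(s,a)$ and each $k$,
\begin{equation*}
P\left(\left|Q_k(s,a) - \mathcal{T}^\Pi Q_{k-1}(s,a)\right| \geq K\sqrt{Var[Q_k(s,a)]}\right) \leq \frac{1}{K^2}.
\end{equation*}
Letting $k^\star$ be the iterate achieving the supremum defining $\delta$, this yields $\delta(s,a) \leq K\sqrt{Var[Q_{k^\star}(s,a)]} \leq K\sup_k\sqrt{Var[Q_k(s,a)]}$ on an event of probability at least $1-1/K^2$. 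On this event the $\sup_k\sqrt{Var}$ factor cancels inside the integral, and using $\int_a \pi(a|s)\,da = 1$ together with $Z(s)\geq 1$ I obtain
\begin{equation*}
\E_{\pi'}[\delta(s,a)] \leq \frac{\beta K}{Z(s)}\int_a \pi(a|s)\,da = \frac{\beta K}{Z(s)} \leq \beta K,
\end{equation*}
and since this bound is independent of the particular $\pi$ underlying $\pi'$, it also dominates $\max_{\pi'}\E_{\pi'}[\delta(s,a)]$.

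Third, given the target constant $C$ (the interesting case being $C>0$), I would pick any $K$ and set $\beta = C/K$, so that $\beta K = C$. The event $\{\max_{\pi'}\E_{\pi'}[\delta(s,a)]\geq C\}$ is then contained in the Chebyshev failure event, which gives $P(\max_{\pi'}\E_{\pi'}[\delta(s,a)]\geq C) \leq 1/K^2$, exactly the claimed conclusion; choosing $K$ large makes the failure probability arbitrarily small.

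The hard part will be the second step: rigorously justifying that the Bellman approximation error $\delta(s,a)$ may be identified with a deviation of the random estimate $Q_k$ from its posterior mean, and in particular absorbing the $\sup_k$ over infinitely many iterates into a single $1/K^2$ budget — since $k^\star$ is itself random, a naive union bound over $k$ would destroy the probability estimate. I expect this to require either a uniform comparability assumption on the per-iterate variances (consistent with the Lipschitz-continuity remark preceding the theorem) or a reduction to a single worst-case iterate, so that one application of Chebyshev suffices. The remaining ingredients — the cancellation of the standard-deviation factor and the choice of $\beta$ — are elementary.
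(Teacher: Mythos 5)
Your proposal is essentially correct and rests on the same two pillars as the paper's proof: Chebyshev's inequality applied to the dropout-randomized $Q_k(s,a)$, and the exact cancellation of $\sup_k\sqrt{Var[Q_k(s,a)]}$ against the denominator of $\pi'$, followed by choosing $\beta K$ to hit the target constant. The routes differ in two places. First, the paper does not identify $E[Q_k(s,a)]$ with $\mathcal{T}^\Pi Q_{k-1}(s,a)$ outright; it inserts $\pm E[Q_k(s,a)]$ and splits $\delta(s,a)$ by the triangle inequality into a deviation-from-mean term (handled by Chebyshev exactly as you do) and a bias term $\left|E[Q_k(s,a)]-\mathcal{T}^\Pi Q_{k-1}(s,a)\right|$, which it then dispatches by asserting $E[\mathcal{T}^\Pi Q_{k-1}(s,a)]=E[Q_k(s,a)]$ and reusing the same bound. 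Your direct identification is a strictly stronger unbiasedness assumption, but it is the same assumption in substance, so little is lost. Second, the paper does not argue by event containment: it bounds the \emph{expectation} of the normalized deviation by $\left(1-\frac{1}{K^2}\right)\beta K+\frac{2}{K^2}Q_m$, using the worst-case bound $|Q_k(s,a)-E[Q_k(s,a)]|\le 2Q_m$ on the Chebyshev failure event. This is where hypothesis 2) actually enters the paper's argument; in your version $Q_m$ never appears, and your event-containment step quietly requires the Chebyshev good event to hold simultaneously for ($\pi'$-almost) all actions $a$, which a single pointwise application of Chebyshev does not provide --- the paper's expectation-level bookkeeping is the cleaner way to integrate over $a$, though ironically your conclusion has the probabilistic form the theorem actually states while the paper's final bound is deterministic in form, and neither write-up fully reconciles the two. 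Finally, you are right to flag the $\sup_k$ as the weak point: the paper applies Chebyshev at a single $k$ and silently moves the supremum inside, with no union bound or uniform-in-$k$ argument, so that gap is shared with the paper rather than introduced by you.
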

\begin{proof}
We firstly apply triangle inequality to unwrap the original formulation into a sum of two differences, and bound the two terms respectively.
\begin{align*}
    &\max_{\pi'} \E_{\pi'}\left[\delta(s,a)\right] \\
    =& \max_{\pi'} \E_{\pi'}\left[\sup_k \left| Q_k(s,a) - \mathcal{T}^\Pi Q_{k-1}(s,a)\right|\right]\\
    =& \max_{\pi'} \E_{\pi'}\bigg[\sup_k \Big| Q_k(s,a) + E[Q_k(s,a)] \\ & - E[Q_k(s,a)] - \mathcal{T}^\Pi Q_{k-1}(s,a)\Big|\bigg]\\
    \leq &\mathbin{\textcolor{red}{\max_{\pi'} \E_{\pi'}\left[\sup_k \left| Q_k(s,a) - E[Q_k(s,a)]\right|\right]}} + \\ &\mathbin{\textcolor{blue}{\max_{\pi'} \E_{\pi'}\left[\sup_k \left|E[Q_k(s,a)] - \mathcal{T}^\Pi Q_{k-1}(s,a)\right|\right]}}
\end{align*}
Starting with the {\textcolor{red}{red}} term, we firstly obtain a probabilistic bound on the distance term inside the expectation with the Chebyshev's inequality
\[P\left(|X-E[X]|\geq \sigma K\right)\leq \frac{1}{K^2}\]
\begin{align*}
    P\left(\left| Q_k(s,a) - E[Q_k(s,a)]\right|\geq K\sqrt{Var[Q_k(s,a)]}\right)\leq \frac{1}{K^2}\\
    P\left(\frac{\beta}{\sup_k\sqrt{Var[Q_k(s,a)]}}\left| Q_k(s,a) - E[Q_k(s,a)]\right|\geq \beta K\right)\leq \frac{1}{K^2}\\
\end{align*}
Secondly, note that by assumption $|Q|$ is bounded by $Q_m$. This provides us an upper-bound on the difference $|Q(s,a)-E[Q(s,a)]|\leq 2Q_{m}$. Making use of both the general upper-bound and the tight probabilistic bound, by setting $\pi'(a|s) = \frac{\beta}{\sup_k\sqrt{Var[Q_k(s,a)]}}\pi(a|s)/Z(s)$, we have
\begin{align*}
    &\mathbin{\textcolor{red}{\max_{\pi'} \E_{\pi'}\left[\sup_k \left| Q_k(s,a) - E[Q_k(s,a)]\right|\right]}}\\
    =&\max_{\pi'} \E_{\pi}\left[\frac{\beta}{\sup_k\sqrt{Var[Q_k(s,a)]}}\sup_k \left| Q_k(s,a) - E[Q_k(s,a)]\right| / Z(s)\right]\\
    \leq &\left(1-\frac{1}{K^2}\right)\beta K + \frac{2}{K^2} Q_m \leq B\\
\end{align*}
By assumption $Z(s)\geq 1$ and can be safely ignored. By picking the appropriate $K$ and $\beta$, we can bound the {\textcolor{red}{red}} term by any constant $B\in \R$. The same bound holds for the {\textcolor{blue}{blue}} term since $E[\mathcal{T}^\Pi Q_{k-1}(s,a)]=E[Q_k(s,a)]$.
We therefore arrive at a constant bound for the Bellman error term $\max_{\pi'} \E_{\pi'}\left[\delta(s,a)\right]$.
\end{proof}

Note that in Theorem \ref{theorem:bound} Assumption 1) does not change the optimization problem in equation \ref{equation:Q}, \ref{equation:Pi} and Assumption 2) can be easily satisfied by imposing Spectral Norm on the $Q$ function.

Now according to the constant bound on $\delta(s,a)$ from Theorem \ref{theorem:bound}, the convergence of our proposed framework follows directly from Theorem \ref{theorem:contraction} \citep{BEAR,farahmand2010error,munos2003error}, with the set of policies $\Pi' = \left\{\pi' \mid \pi'(a|s) = \frac{\beta}{\sup_k\sqrt{Var[Q_k(s,a)]}}\pi(a|s)/Z(s), \pi \in \Pi\right\}$.

\subsection{Training Time of MC Dropout}
\label{subsection:training_time}
Since the most time is spent during training is at communication between the GPU and CPU when performing roll-outs for evaluation. UWAC with dropout takes less than 1.5 times the training time of BEAR, with 100 times the original batchsize calculating sample uncertainty (in parallel on a single GPU).

\subsection{Observations on the Q Value Divergence of BEAR} We tested BEAR learning rate from $\{10^{-3}, 10^{-4}, 10^{-5}\}$, the divergence behavior did not change. The action support constraint of BEAR is helpful, but it relies on the MMD approximation which is not perfect. Intuitively, UWAC provides a complementary enforcement to the action support constraint, where OOD actions that survive the MMD loss are further penalized.


\section{Figures}
\begin{figure}[h]
    \centering
    \vspace{4mm}
    \includegraphics[width=0.49\textwidth, angle=0]{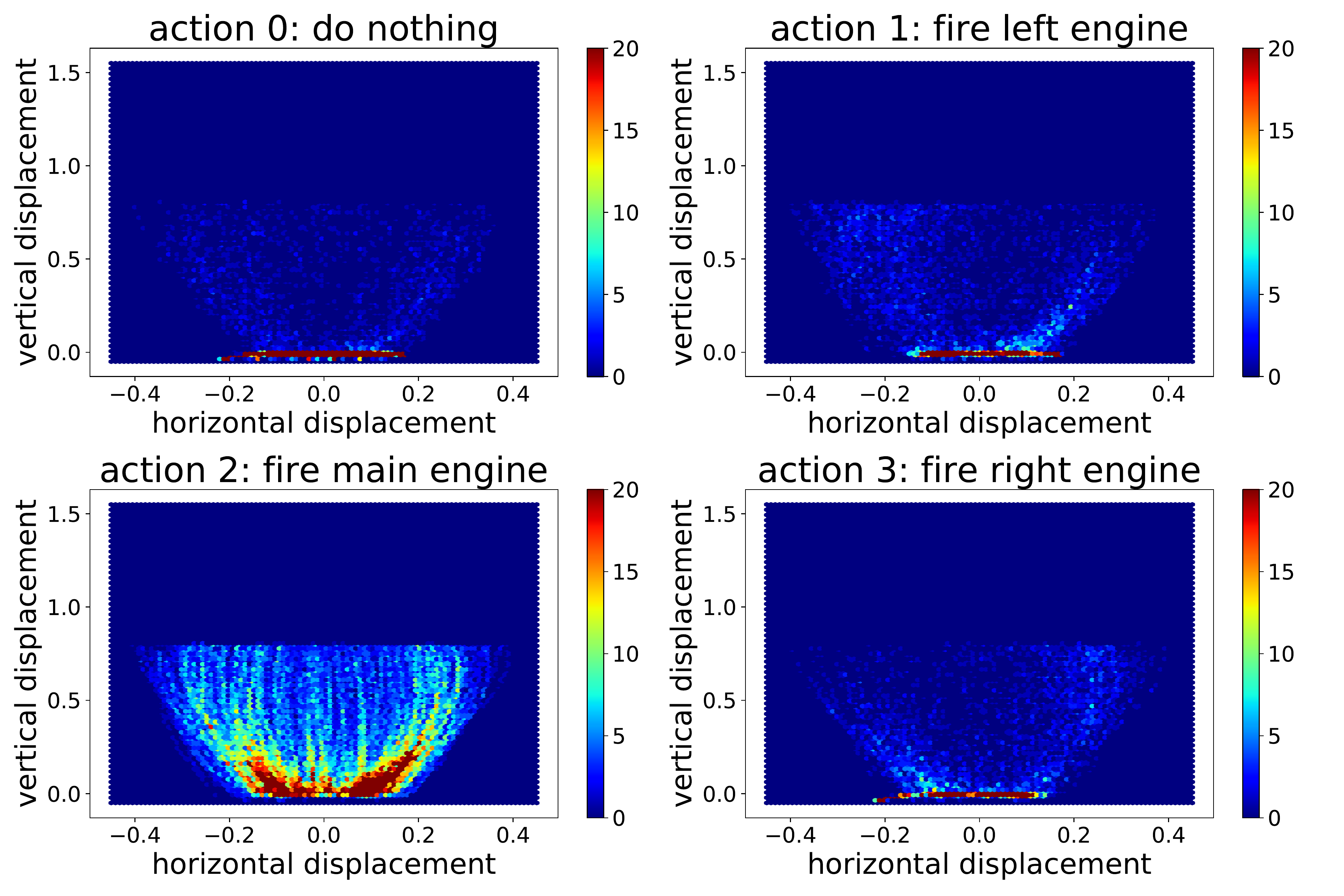}
    \includegraphics[width=0.49\textwidth, angle=0]{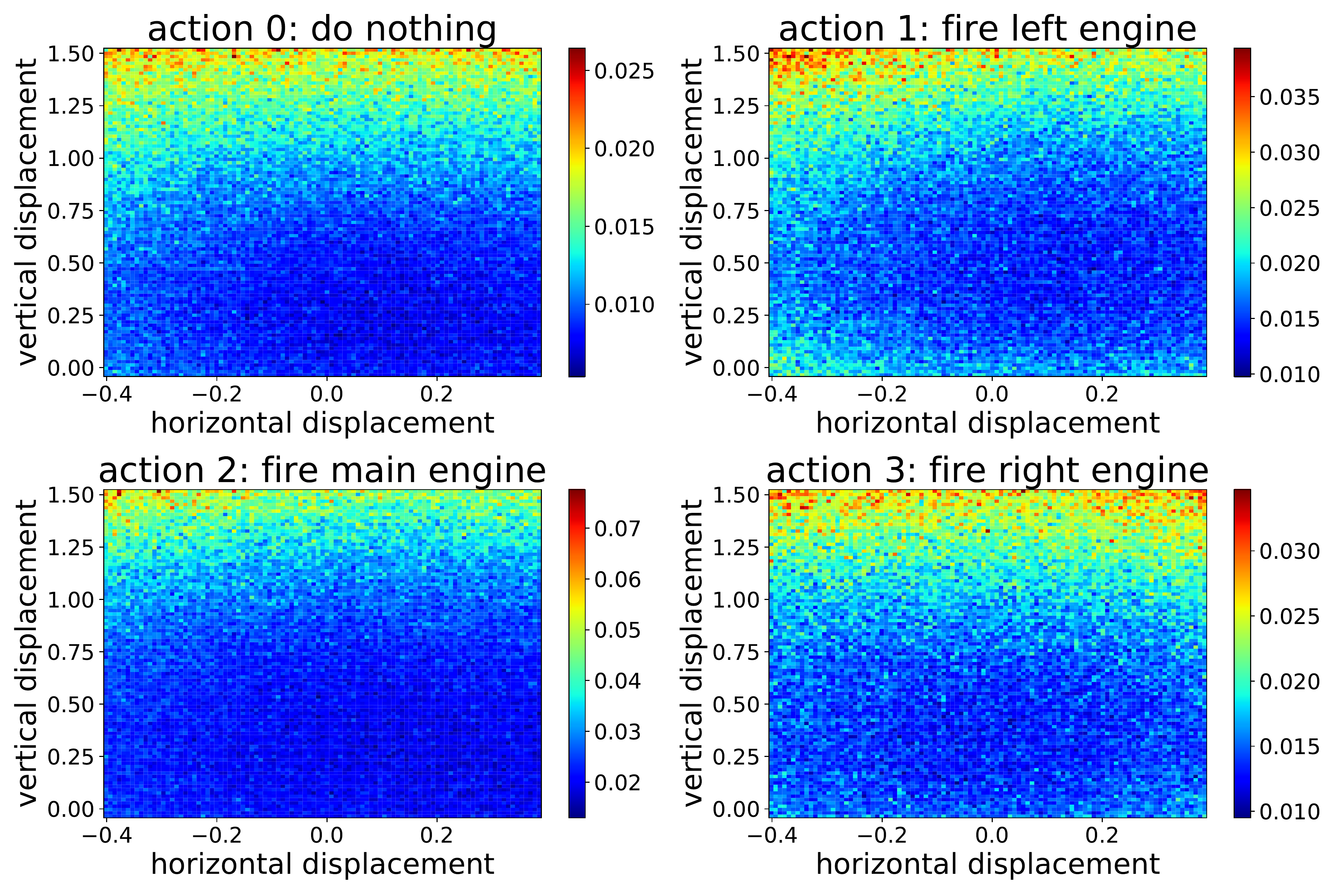}
    \caption{\small  \textbf{Top.} The training dataset has observations with vertical displacements $>0.8$ removed. This makes all states on the top OOD states. \textbf{Bottom.} Our model estimates higher uncertainty (brighter color) on the top and lower uncertainty (colder color) on the bottom.}
    \label{fig:lunarlander_UD}
\end{figure}

\begin{figure*}[h]
    \centering
    \vspace{4mm}
    \includegraphics[trim={0 9.5cm 2cm 10cm}, width=0.49\textwidth, angle=0]{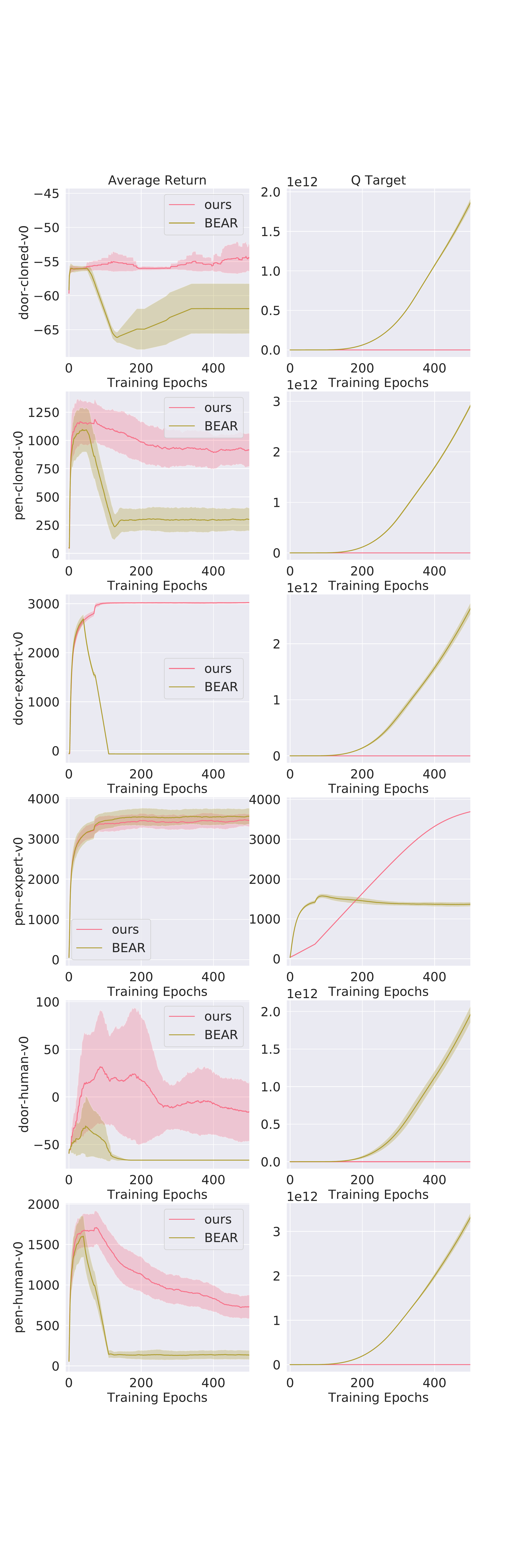} 
    \includegraphics[trim={0 9.5cm 2cm 10cm}, width=0.49\textwidth, angle=0]{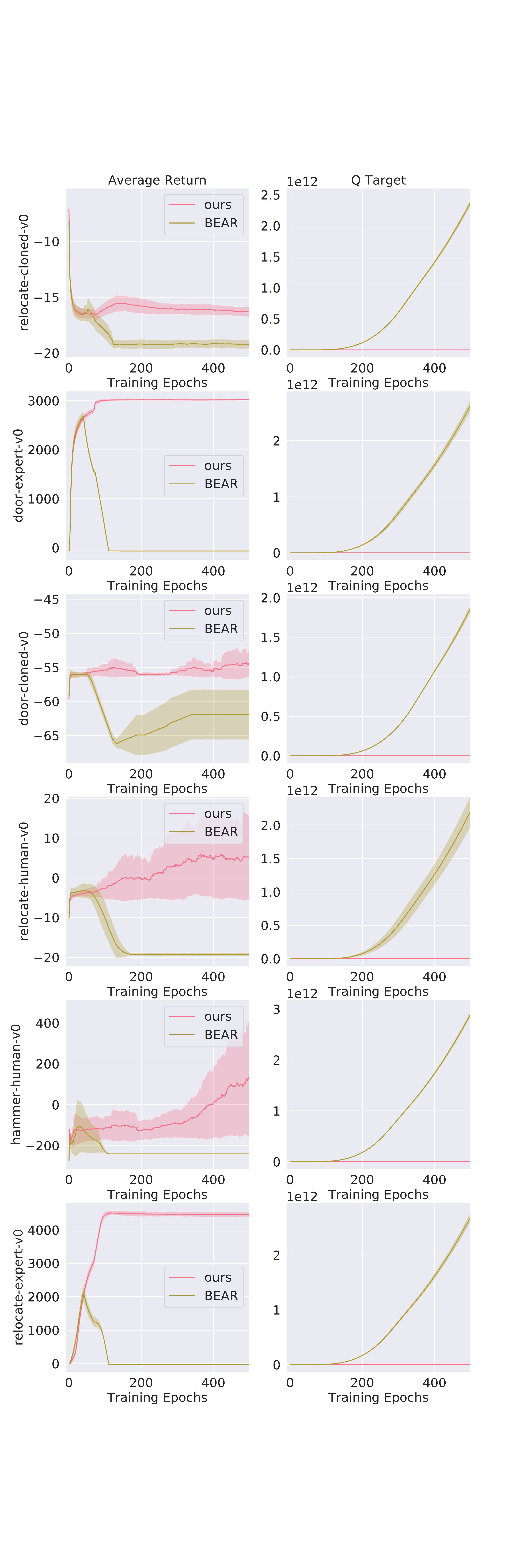} 
    \caption{\small Plot of average return v.s. training epochs, together with the corresponding average Q Target over training epochs on the D4RL Adroit hand offline data set. Results are averaged across 5 random seeds. Note that the performance of baseline (BEAR) degrades over time (also noted in original paper \cite{BEAR}), and the Target $Q$ value explodes. Our method, UWAC, achieves significantly better overall performance and training stability.} 
    \label{fig:adroit_full}
\end{figure*}

\begin{figure*}[h]
    \centering
    \vspace{4mm}
    \includegraphics[trim={0 9.5cm 2cm 10cm}, width=0.49\textwidth, angle=0]{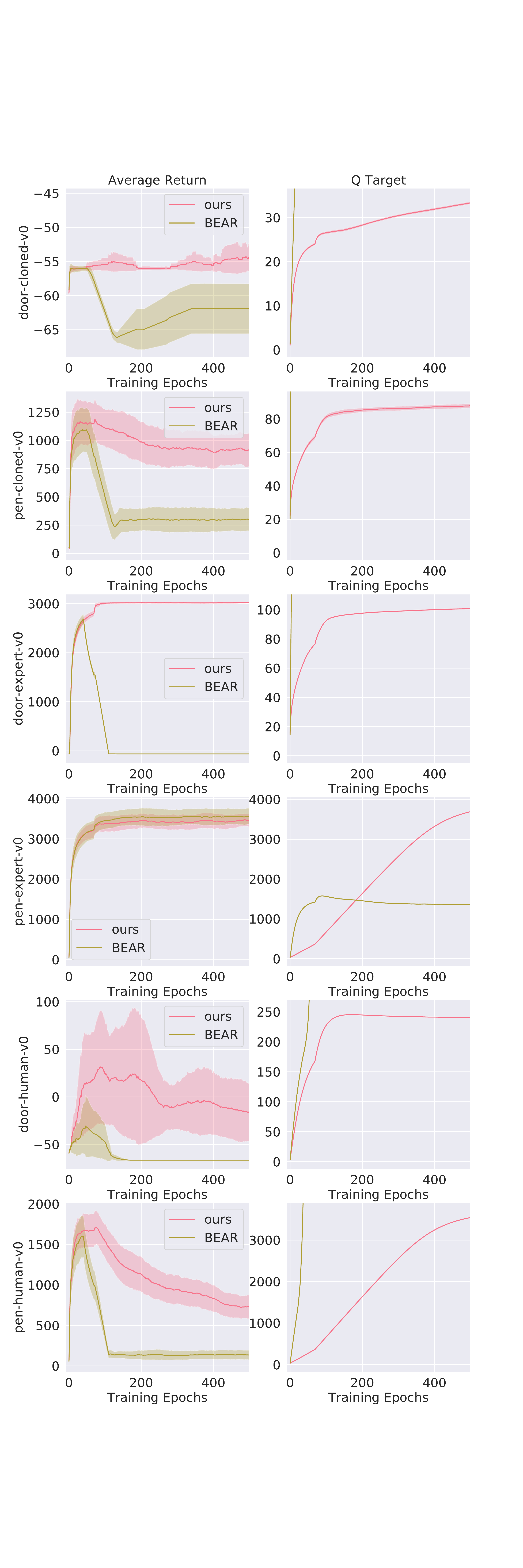} 
    \includegraphics[trim={0 9.5cm 2cm 10cm}, width=0.49\textwidth, angle=0]{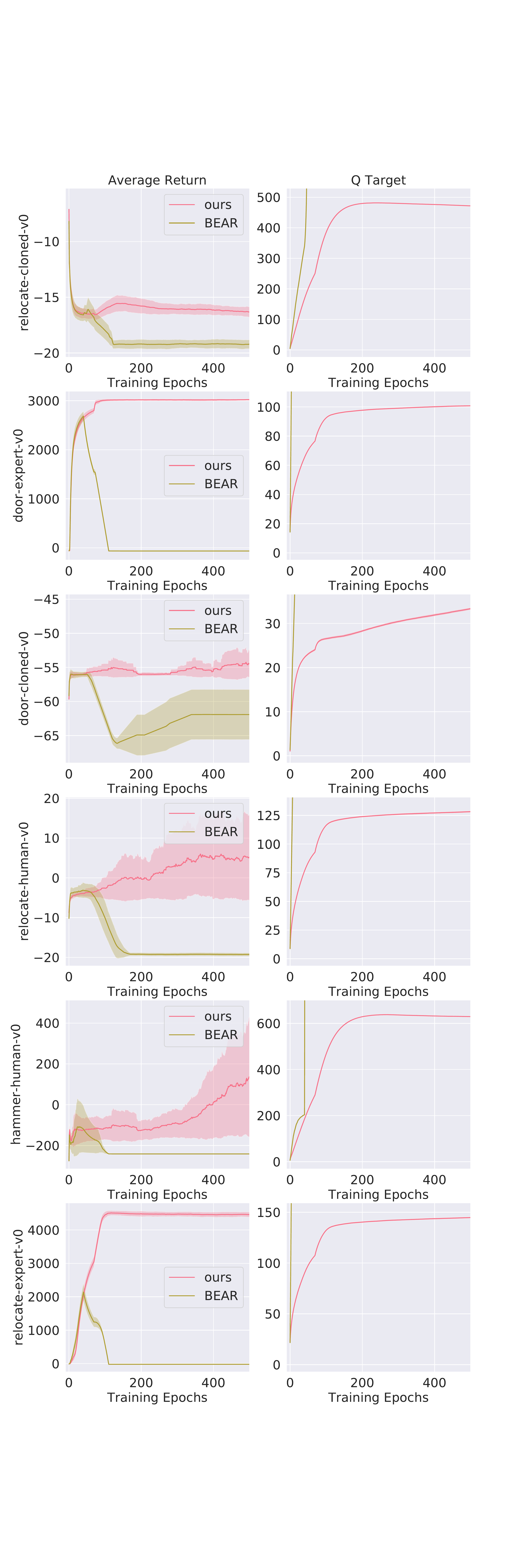} 
    \caption{\small Plot of average return v.s. training epochs (zoomed-in). The figure is the same as \ref{fig:adroit_full}, except that the second column is zoomed-in on the Q values of the UWAC critic.} 
    \label{fig:adroit_full_zoom}
\end{figure*}

\begin{figure*}[h]
    \centering
    \vspace{4mm}
    \includegraphics[trim={0 9.5cm 2cm 10cm}, clip, width=0.49\textwidth, angle=0]{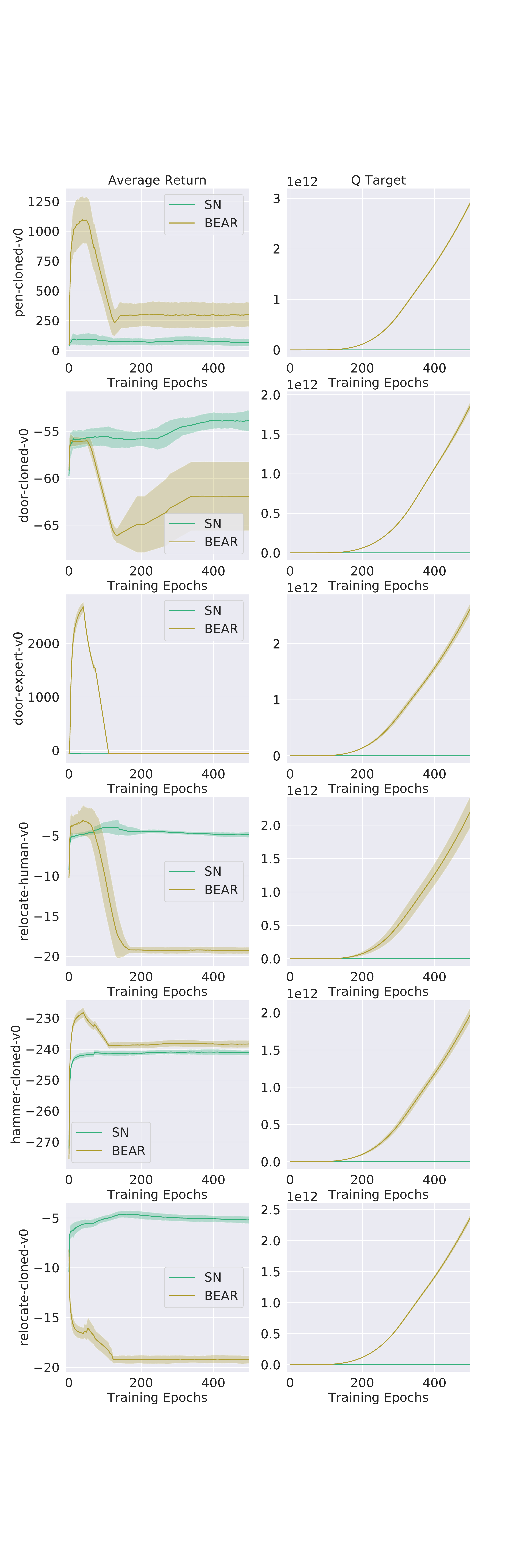} 
    \includegraphics[trim={0 9.5cm 2cm 10cm}, clip, width=0.49\textwidth, angle=0]{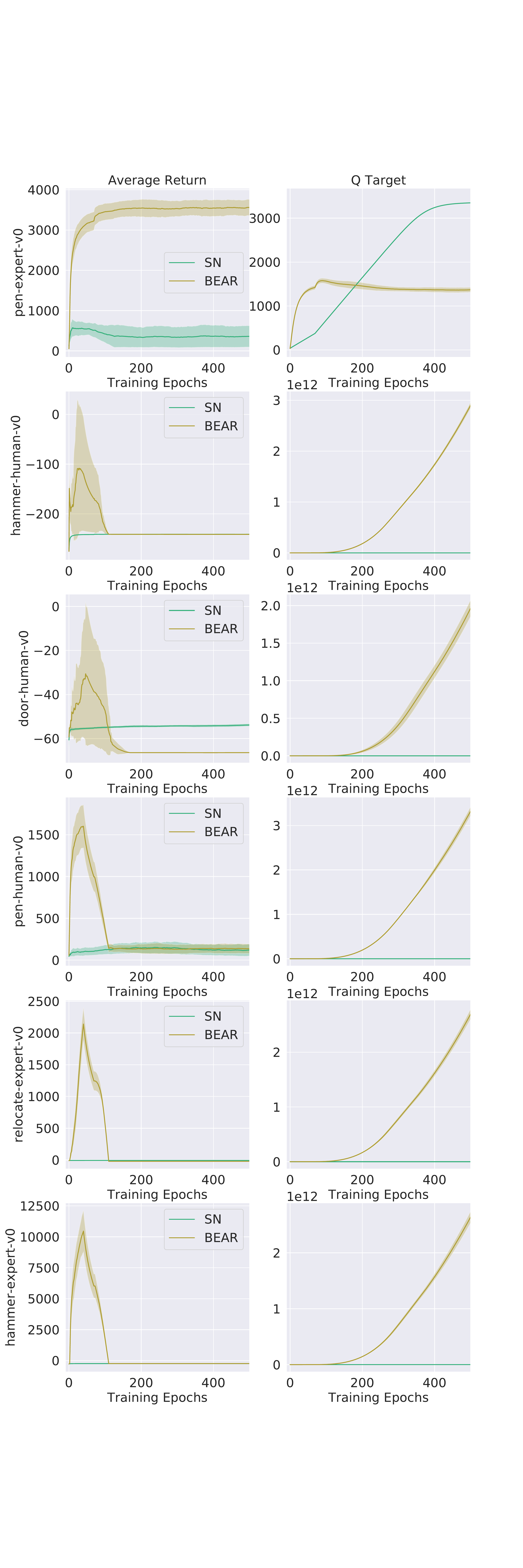} 
    \caption{\small \textbf{Ablation:} Plot of average return v.s. training epochs for BEAR v.s. BEAR+Spectral Norm, together with the corresponding average Q Target over training epochs on the D4RL Adroit hand offline data set. Results are averaged across 5 random seeds. Although BEAR with Spectral Normalized $Q$ function maintains stable $Q$ estimate during training, BEAR with SN often achieves significantly worse training performance in terms of average return.} 
    \label{fig:adroit_abl}
\end{figure*}

\begin{figure*}[h]
    \centering
    \vspace{4mm}
    \includegraphics[trim={0 9.5cm 2cm 10cm}, clip, width=0.49\textwidth, angle=0]{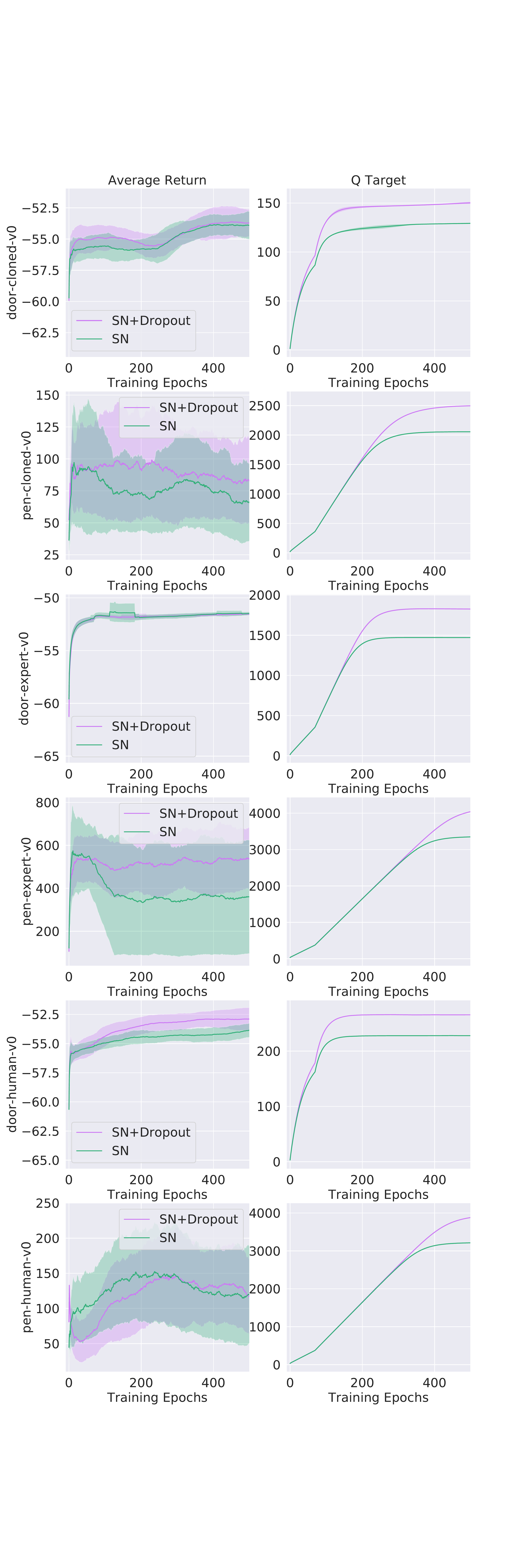} 
    \includegraphics[trim={0 9.5cm 2cm 10cm}, clip, width=0.49\textwidth, angle=0]{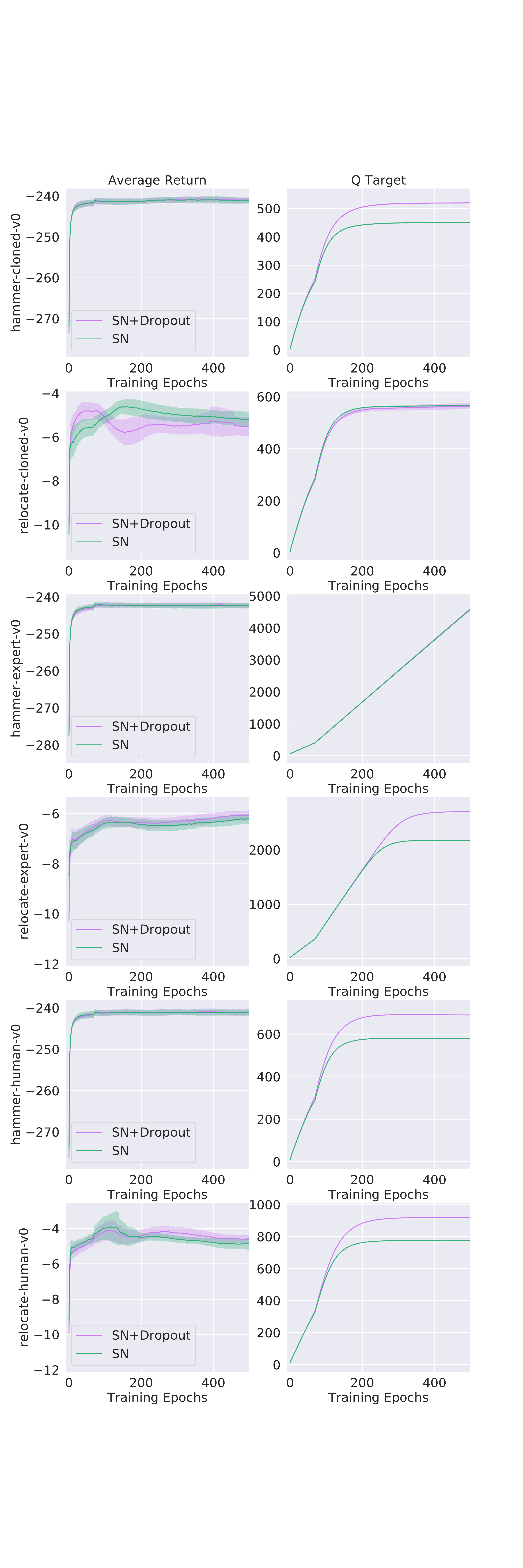} 
    \caption{\small \textbf{Ablation:} Plot of average return v.s. training epochs for BEAR+Spectral Norm v.s. BEAR+Dropout+Spectral Norm, together with the corresponding average Q Target over training epochs on the D4RL Adroit hand offline data set. The results are averaged across 5 random seeds. Without the UWAC reweighing loss, performing dropout on the Q function does not lead to improved performance.}
    \label{fig:adroit_abl1}
\end{figure*}

\begin{figure*}[h]
    \centering
    \vspace{4mm}
    \includegraphics[trim={0 9.5cm 2cm 10cm}, clip, width=0.49\textwidth, angle=0]{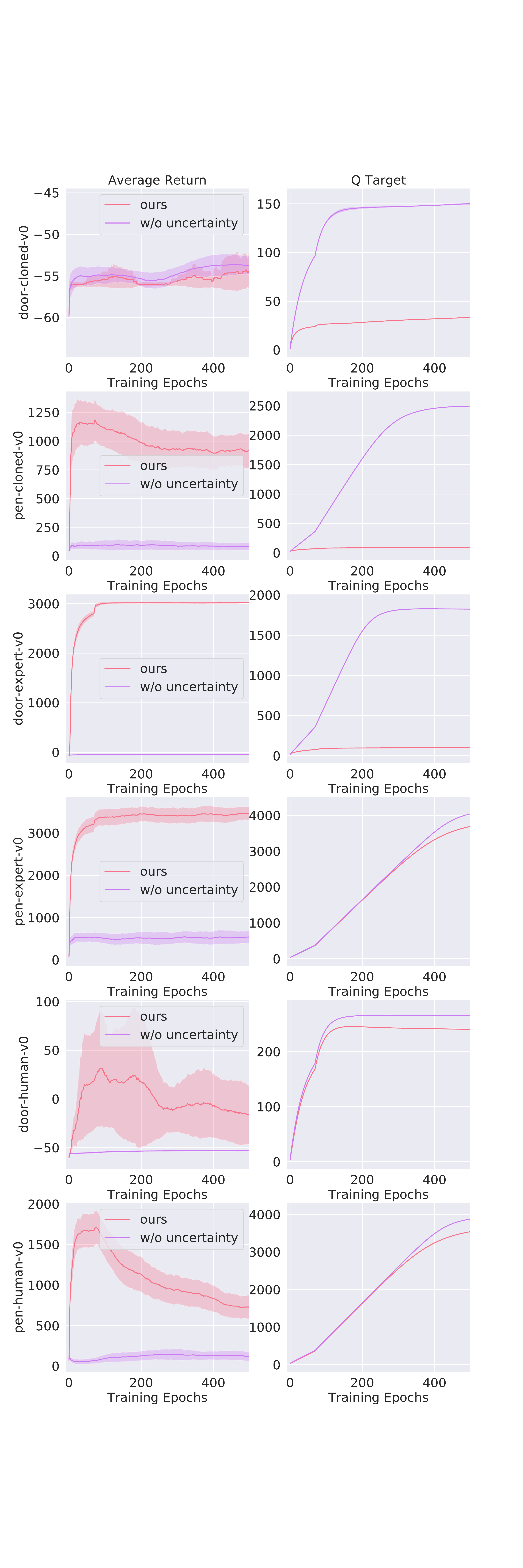} 
    \includegraphics[trim={0 9.5cm 2cm 10cm}, clip, width=0.49\textwidth, angle=0]{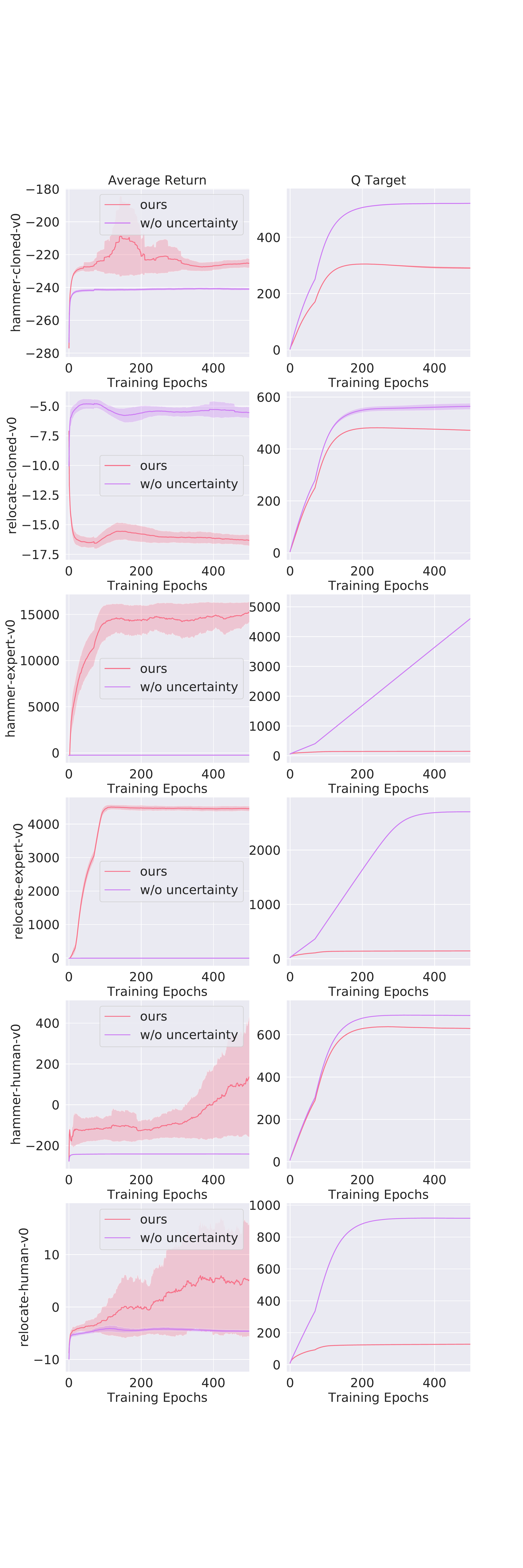} 
    \caption{\small \textbf{Ablation:} Plot of average return v.s. training epochs for UWAC (ours) v.s. ours without uncertainty weighing but with dropout in the Q function, together with the corresponding average Q Target over training epochs on the D4RL Adroit hand offline data set. The results are averaged across 5 random seeds. Without the weighing loss, performance of the agent drops drastically. Note that low performance on hammer-cloned, door-cloned, and relocated-cloned may be attributed to the bad quality of the datasets caused by data collection (explained in section \ref{subsec:adroit})}
    \label{fig:adroit_ablo}
\end{figure*}

\begin{figure*}[h]
    \centering
    \vspace{4mm}
    \includegraphics[trim={0 9.5cm 2cm 10cm}, clip, width=0.49\textwidth, angle=0]{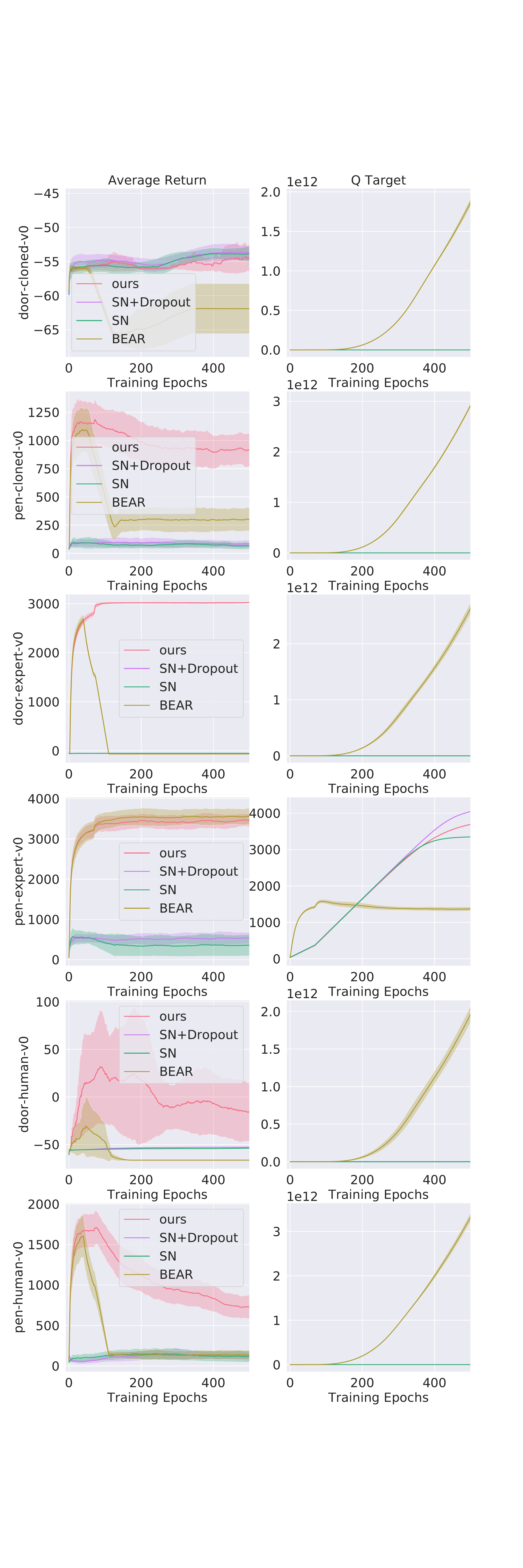} 
    \includegraphics[trim={0 9.5cm 2cm 10cm}, clip, width=0.49\textwidth, angle=0]{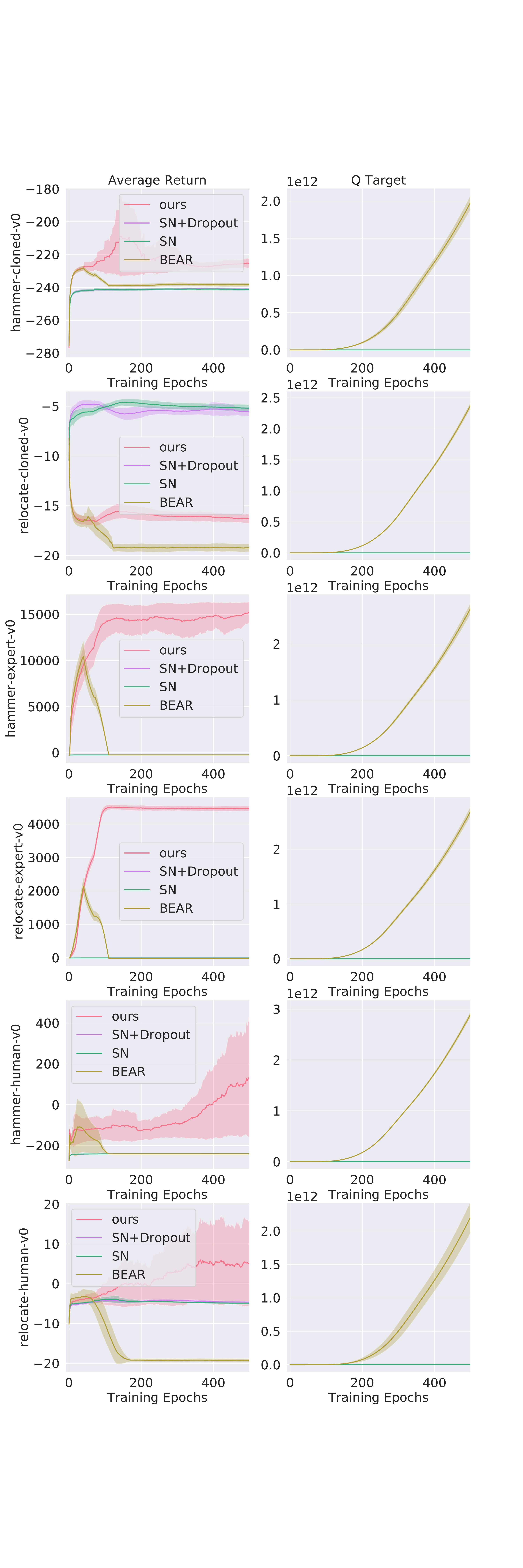} 
    \caption{\small \textbf{Ablation:} Figure \ref{fig:adroit_abl}, \ref{fig:adroit_abl1}, \ref{fig:adroit_ablo} plotted together. Note that SN+Dropout (purple) is also denoted as ours-w/o-uncertainty in Figure \ref{fig:adroit_ablo}.} 
    \label{fig:adroit_ablA}
\end{figure*}

\begin{figure*}[h]
    \centering
    \vspace{4mm}
    \includegraphics[trim={0 9.5cm 2cm 10cm}, clip, width=0.49\textwidth, angle=0]{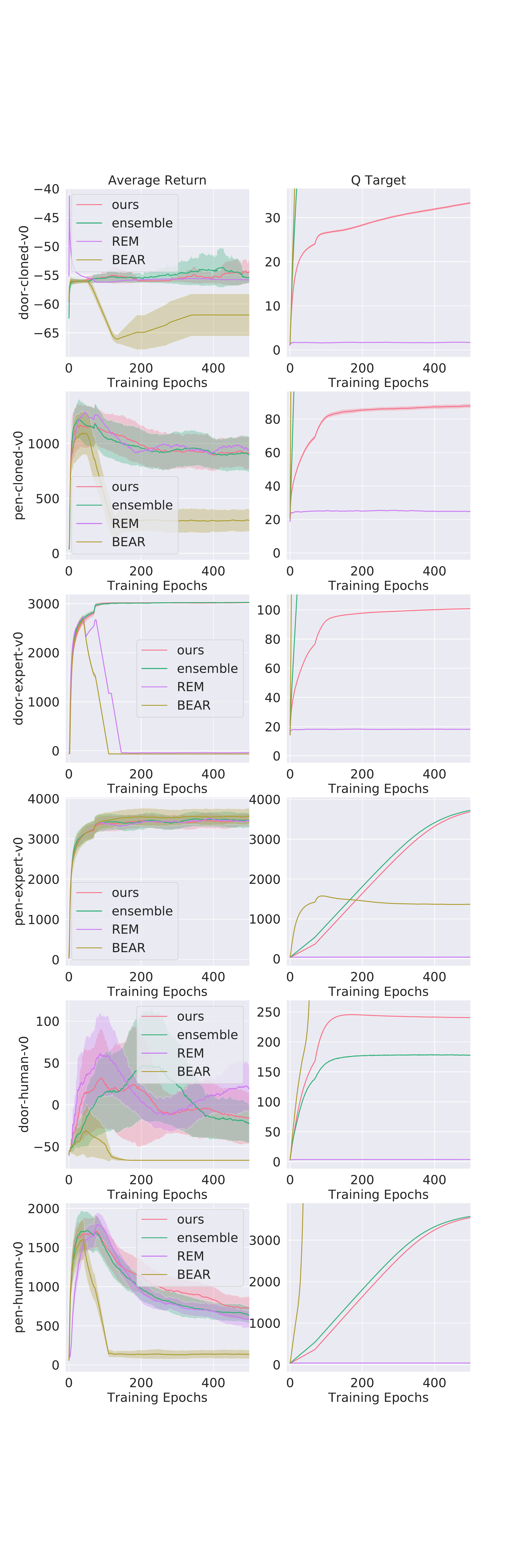} 
    \includegraphics[trim={0 9.5cm 2cm 10cm}, clip, width=0.49\textwidth, angle=0]{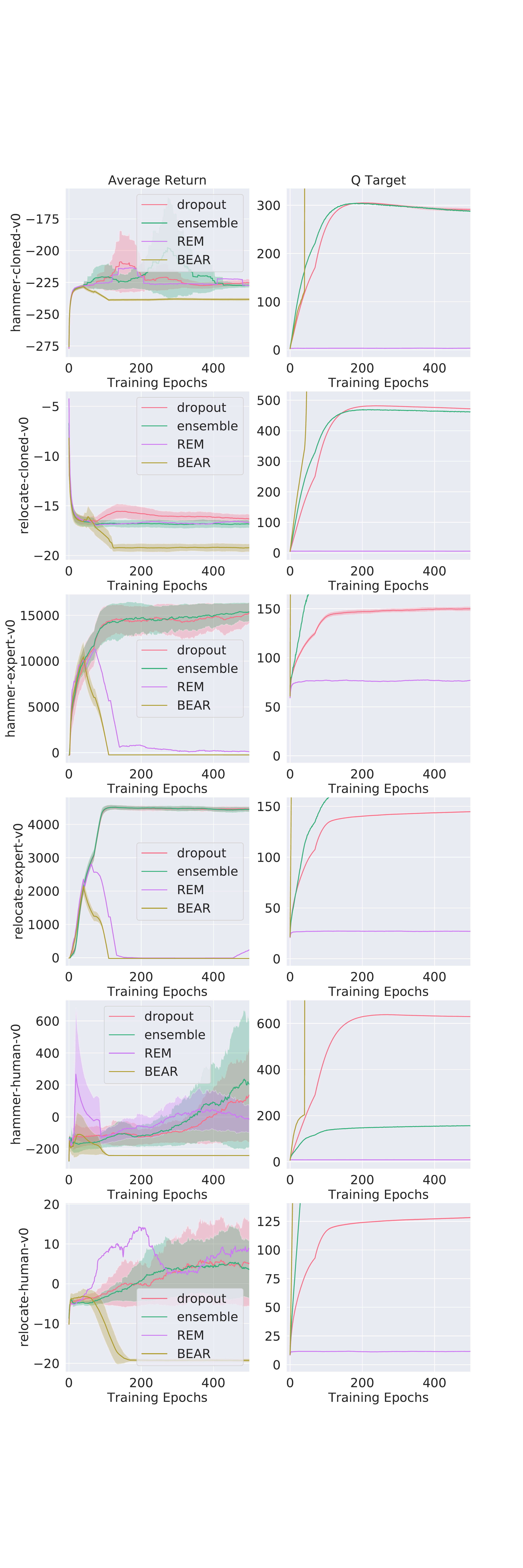} 
    \caption{\small \textbf{Ablation:} Plot of UWAC under dropout (ours), Averaged-DQN ensembles, REM against baseline BEAR.} 
    \label{fig:adroit_REM}
\end{figure*}

\begin{figure*}[h]
    \centering
    \vspace{4mm}
    \includegraphics[trim={0 9.5cm 2cm 10cm}, clip, width=0.49\textwidth, angle=0]{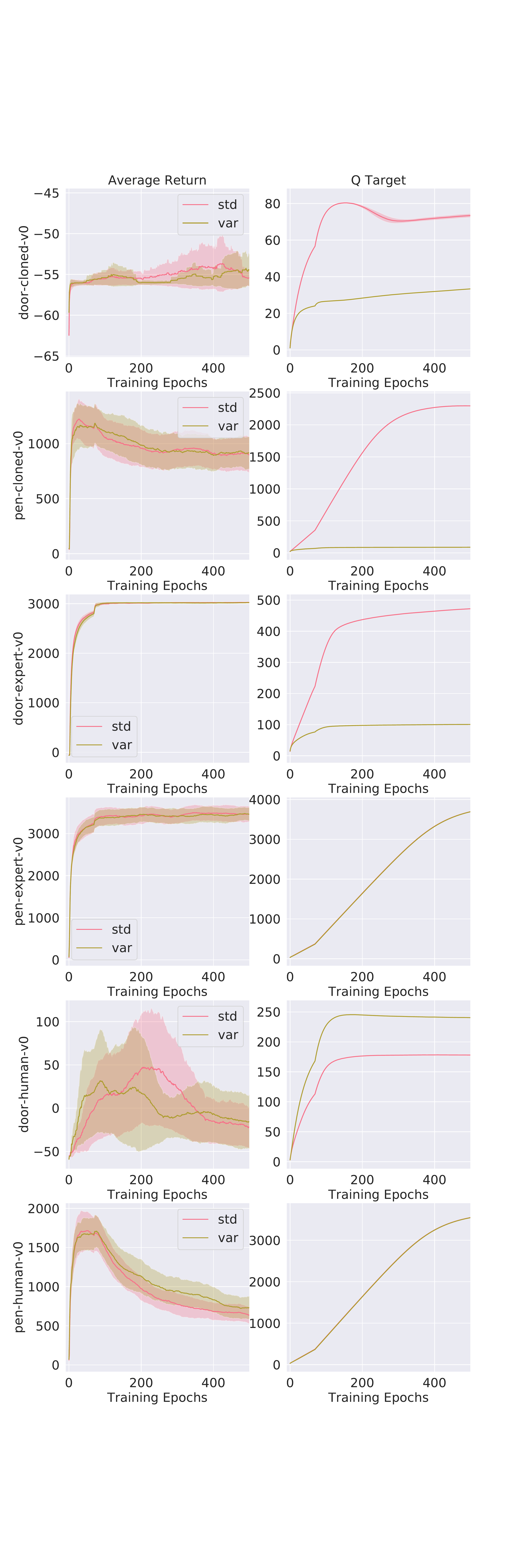} 
    \includegraphics[trim={0 9.5cm 2cm 10cm}, clip, width=0.49\textwidth, angle=0]{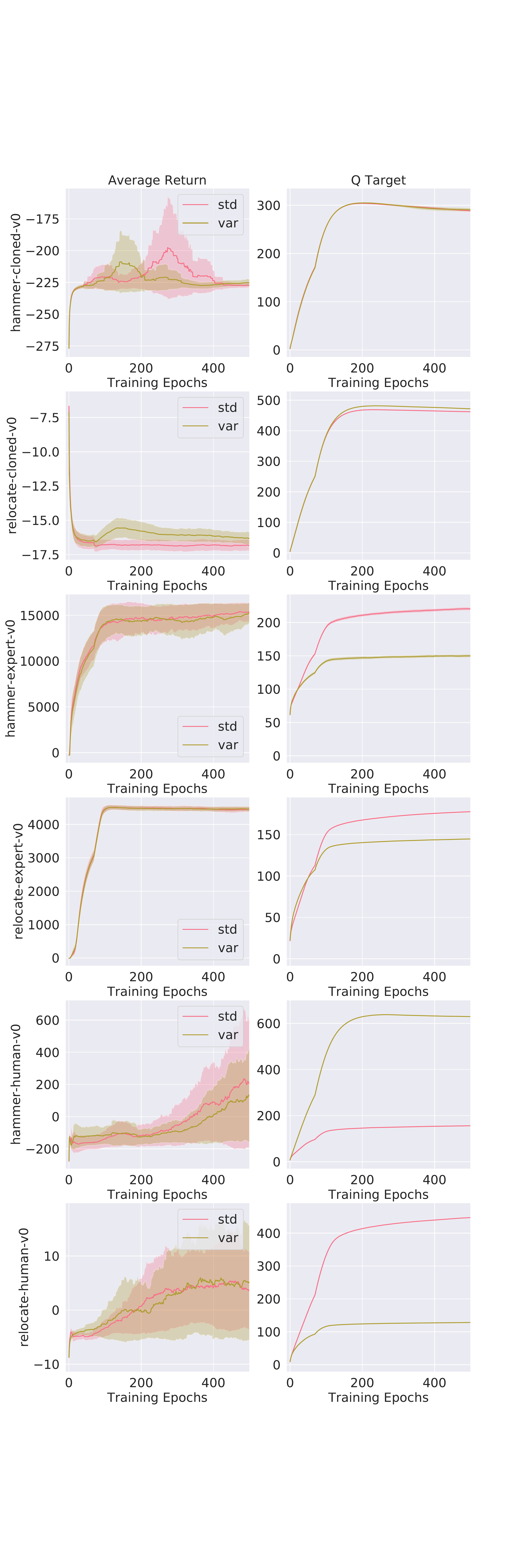} 
    \caption{\small \textbf{Ablation:} Plot of UWAC with variance down-weighing v.s. standard deviation down-weighing.} 
    \label{fig:adroit_std}
\end{figure*}

\begin{figure*}[h]
    \centering
    \vspace{-1mm}
    \includegraphics[trim={5.5cm 10.5cm 6.85cm 8.1cm},clip,width=0.95\textwidth, angle=0]{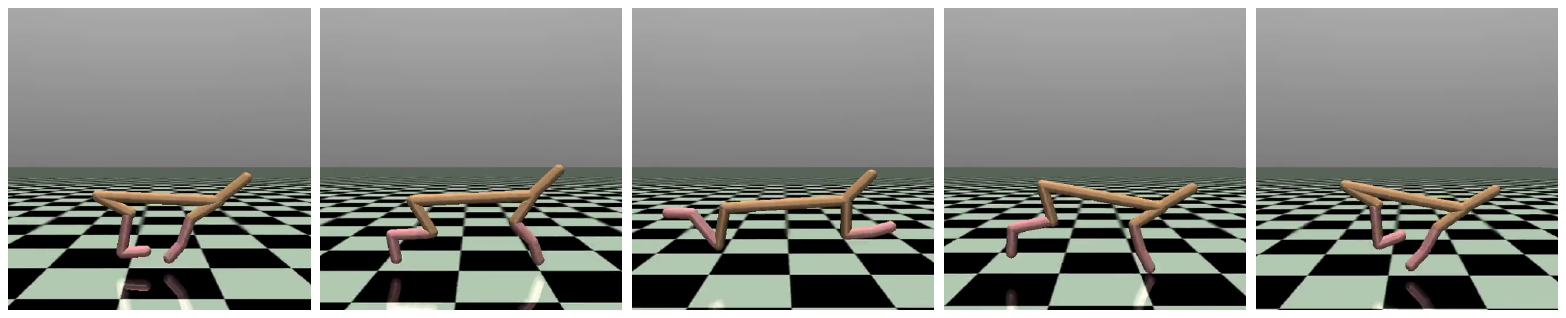}\\[1em]
    \includegraphics[trim={5.5cm 10.5cm 6.85cm 8.1cm},clip,width=0.95\textwidth, angle=0]{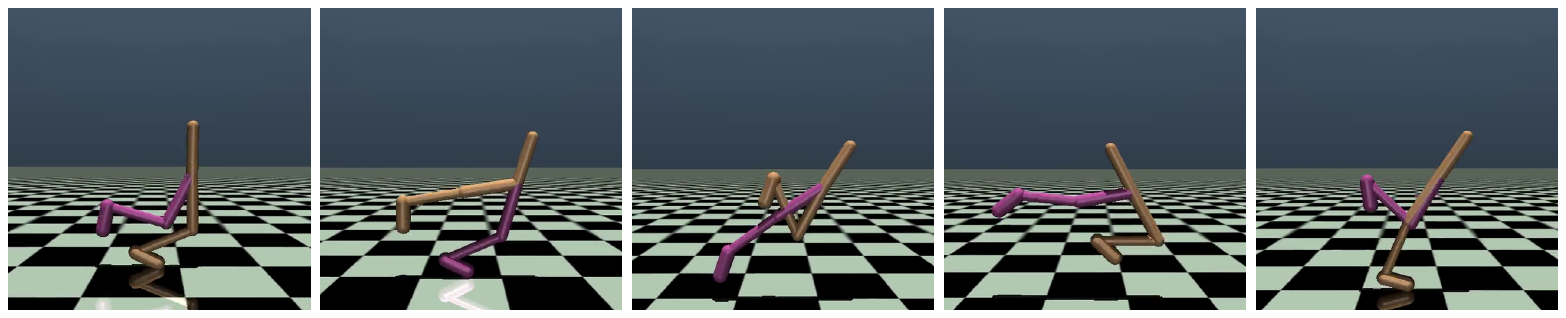}\\[1em]
    \includegraphics[trim={5.5cm 10.5cm 6.85cm 8.1cm},clip,width=0.95\textwidth, angle=0]{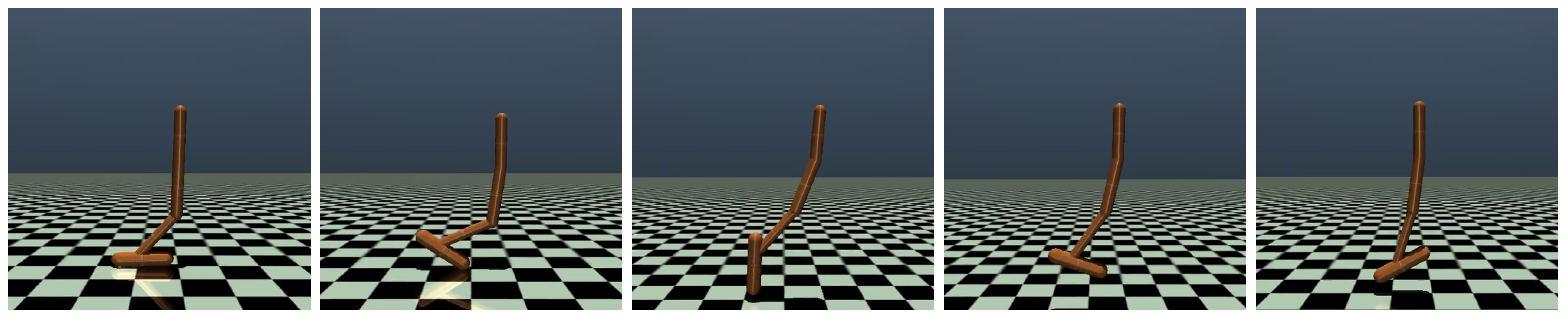}\\
    \vspace{-2mm}
    \caption{Sequences of our offline agent trained from expert demonstrations executing learned policies performing on the halfcheetah, walker2d, and hopper tasks in the MuJuCo Gym environment. See the videos attached in the supplementary.}
    \vspace{-3mm}
    \label{fig:mujuco_sequences}
\end{figure*}

\begin{figure*}[h]
    \centering
    \vspace{-1mm}
    \includegraphics[trim={5.5cm 10.5cm 6.85cm 8.1cm},clip,width=0.95\textwidth, angle=0]{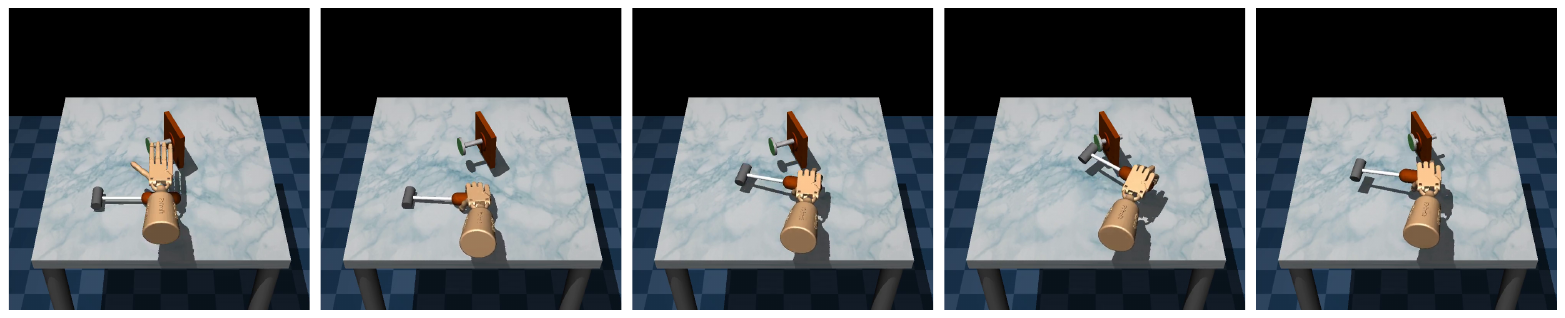}\\[1em]
    \includegraphics[trim={5.5cm 10.5cm 6.85cm 8.1cm},clip,width=0.95\textwidth, angle=0]{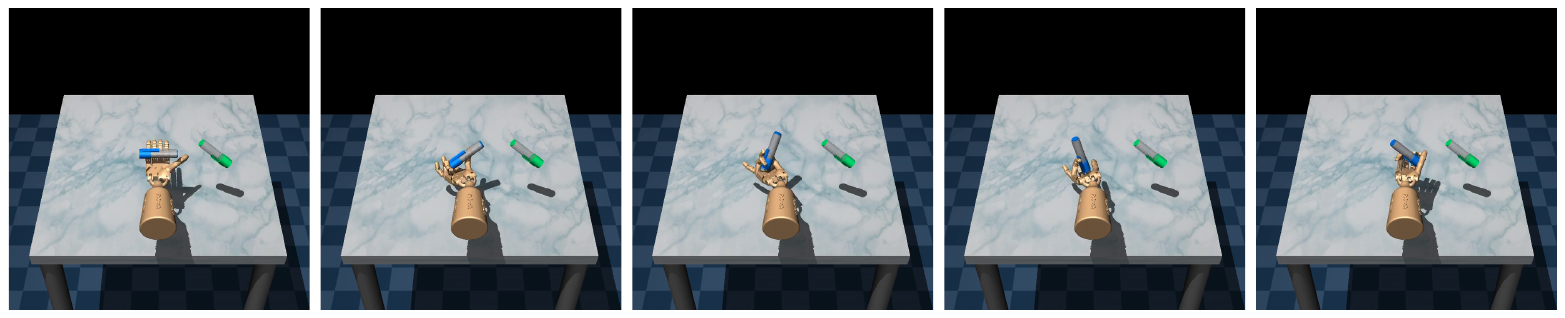}\\[1em]
    \includegraphics[trim={5.5cm 10.5cm 6.85cm 8.1cm},clip,width=0.95\textwidth, angle=0]{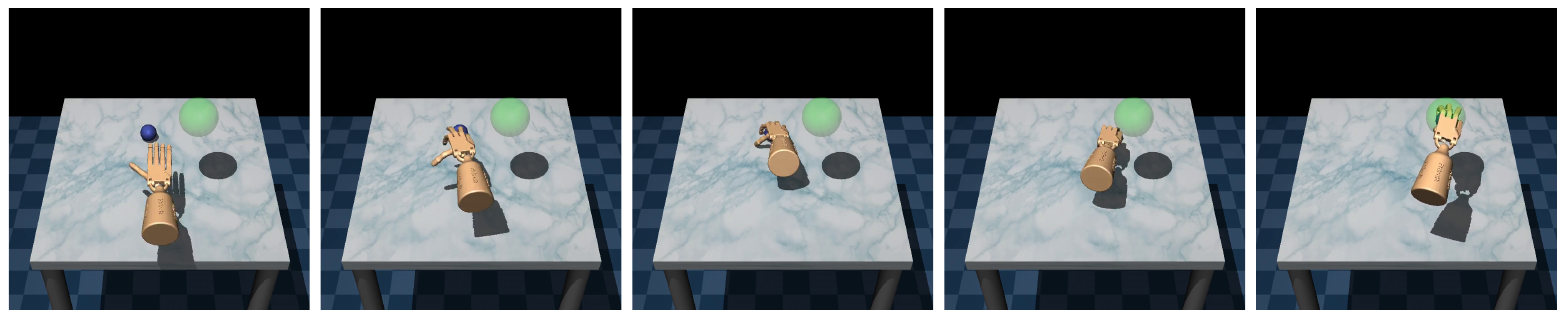}\\
    \vspace{-2mm}
    \caption{Sequences of the agent trained from human demonstrations executing learned policies performing the Adroit tasks of hammering a nail, twirling a pen and picking/moving a ball. The task of opening a door is shown in Figure~\ref{fig:adroit_door}. See the videos attached in the supplementary.}
    \vspace{-3mm}
    \label{fig:adroit_sequences}
\end{figure*}

\end{document}